\newcommand{\tip}{{\bf T}}
\newcommand{\alc}{\mathcal{ALC}}
\newcommand{\alct}{\mathcal{ALC}+\tip_{\bf R}}
\newcommand {\allworlds} {\mathcal{W}_{\mathcal{K}}}
\newcommand {\kk} {\mathcal{K}}
\newcommand{\RR}{{\mathcal{R}}}
\newcommand{\TT}{{\mathcal{T}}}
\newcommand{\AAA}{{\mathcal{A}}}
\newcommand{\pfl}{\tip^{\textsf{\tiny CL}}}
\newcommand{\II}{{\mathcal{I}}}
\newcommand{\emme}{{\mathcal{M}}}
\newcommand{\PP}{{\mathbb{P}}}
\theoremstyle{plain}
\newtheorem{theorem}{Theorem}[section]
\theoremstyle{definition}
\newtheorem{definition}[theorem]{Definition}
\newtheorem{example}[theorem]{Example}
\theoremstyle{remark}
\begin{document}


\title{A Description Logic Framework for Commonsense Conceptual \\ Combination Integrating Typicality, Probabilities \\ and Cognitive Heuristics}

\author{
\name{Antonio Lieto\textsuperscript{a}\thanks{CONTACT Antonio Lieto. Email: antonio.lieto@unito.it} and Gian Luca Pozzato\textsuperscript{b}\thanks{CONTACT Gian Luca Pozzato. Email: gianluca.pozzato@unito.it}}
\affil{\textsuperscript{a}Dipartimento di Informatica, Università di Torino, Italy and ICAR-CNR, Istituto di Calcolo e Reti ad Alte Prestazioni, Palermo, Italy; \\ \textsuperscript{b}Dipartimento di Informatica, Università di Torino, Italy}
}

\maketitle

\begin{abstract}
We propose a nonmonotonic Description Logic of typicality able to account for the phenomenon of the combination of prototypical concepts.
The proposed logic relies on the logic of typicality $\alct$, whose semantics is based on the notion of rational closure, as well as on the distributed semantics  of probabilistic Description Logics, and is equipped with a cognitive heuristic used by humans for concept composition. 

We first extend the  logic of typicality $\alct$ by typicality inclusions of the form $p \ :: \ \tip(C) \sqsubseteq D$, whose intuitive meaning is that ``we believe with degree $p$ about the fact that typical $C$s are $D$s''. As in the distributed semantics, we  define different scenarios containing only some typicality inclusions, each one having a suitable probability. We then exploit such scenarios in order to ascribe  typical properties to a concept $C$ obtained as the combination of two prototypical concepts.
We also show that reasoning in the proposed Description Logic is \textsc{ExpTime}-complete as for the underlying standard Description Logic $\alc$.
\end{abstract}

\begin{keywords}
Nonmonotonic reasoning; description logics; common-sense reasoning; cognitive modelling.
\end{keywords}

\section{Introduction}
Inventing novel concepts by combining the typical knowledge of pre-existing ones is among the most creative cognitive abilities exhibited by humans. This generative phenomenon highlights some crucial aspects of the knowledge processing capabilities in human cognition and concerns high-level capacities associated to creative thinking and problem solving. Still, it represents an open challenge in the field od Artificial Intelligence (AI) [\cite{boden1998creativity, colton2012computational}]. 
Dealing with this problem requires, from an AI perspective, the harmonization of two conflicting requirements that are hardly accommodated in symbolic systems (including formal ontologies [\cite{frixione2011representing}]): the need of a syntactic and semantic compositionality (typical of logical systems) and that one concerning the exhibition of typicality effects.  
According to a well-known argument, in fact, prototypes are not compositional [\cite{fodor1981present, osherson1981adequacy, murphy2004big, gleitman2012can}]. The argument runs as follows: consider a concept like \emph{pet fish}. It results from the composition of the concept \emph{pet} and of the concept \emph{fish}. However, the prototype of \emph{pet fish} cannot result from the composition of the prototypes of a pet and a fish: e.g. a typical pet is furry and warm, a typical fish is grayish, but a typical pet fish is neither furry and warm nor grayish (typically, it is red).


In this work, we provide a framework able to account for this type of human-like concept combination by explicitly relying on a formalization of the prototype theory \footnote{Thus providing additional arguments in favor of the theoretical positions according to which it is possible to reconcile compositionality and prototypes [\cite{prinz2012regaining,hampton2000concepts}].}.
We propose a nonmonotonic Description Logic (from now on DL) of typicality called $\pfl$ (typicality-based compositional logic). 
This logic combines two main ingredients. 
The first one relies on the DL of typicality $\alct$ introduced in [\cite{AIJ2014}].
In this logic, ``typical'' properties can be directly specified  by means of a
``typicality'' operator $\tip$ enriching  the underlying DL, and a TBox can contain inclusions of the form $\tip(C) \sqsubseteq D$ to represent that ``typical $C$s are also $Ds$''. As a difference with standard DLs, in the logic $\alct$  one can consistently express exceptions and reason about defeasible inheritance as well. 
For instance, a knowledge base can consistently express that ``normally, athletes are in fit'', whereas ``sumo wrestlers usually are not in fit'' by the typicality inclusions 
\begin{quote}
  $\tip (\mathit{Athlete}) \sqsubseteq \mathit{Fit}$  \\
  $\tip (\mathit{SumoWrestler}) \sqsubseteq  \lnot \mathit{Fit},$
\end{quote}
given that $\mathit{SumoWrestler} \sqsubseteq \mathit{Athlete}$. 
The semantics of the $\tip$ operator is  characterized by the properties of \emph{rational logic} [\cite{whatdoes}], recognized as the core properties of nonmonotonic reasoning. $\alct$ is characterized by a minimal model semantics corresponding to an extension to DLs of a notion of \emph{rational closure} as defined in [\cite{whatdoes}] for propositional logic: the
idea is to adopt a preference relation among $\alct$ models, where intuitively a model is preferred to another one if it contains less exceptional elements, as well as a notion of \emph{minimal entailment} restricted to models that are minimal with respect to such preference relation.  
As a consequence, $\tip$ inherits well-established properties like \emph{specificity} and \emph{irrelevance}: in the example, the logic $\alct$ allows us to infer $\tip(\mathit{Athlete} \sqcap \mathit{Bald}) \sqsubseteq \mathit{Fit}$ (being bald is irrelevant with respect to being in fit) and, if one knows that Hiroyuki is a  typical sumo wrestler, to infer that he is not in fit, giving preference to the most specific information.

As a second ingredient, we rely on a distributed semantics similar to the DISPONTE semantics proposed by [\cite{disponteijcai,riguzzi}] for probabilistic extensions of DLs, more precisely we extend the typicality inclusions of $\alct$ with probabilities intended as \emph{degrees of belief}. As in the DISPONTE semantics, we make use of these degrees in order to define different scenarios, corresponding in our case to subsets of typicality inclusions.
  More in detail, we restrict our adoption of DISPONTE to label inclusions $\tip(C) \sqsubseteq D$ with a real number between 0.5 and 1, representing the degree of belief in such an inclusion\footnote{We want to stress that, as in any probabilistic formal framework, probabilities are assumed to come from an application domain. This is true also for other frameworks such as, for example, fuzzy logics or  probabilistic extensions of logic programs. In this paper, we focus on the proposal of the formalism itself, therefore the machinery for obtaining probabilities from a dataset of the application domain is out of the scope.}, assuming that each axiom is independent from each other (as in the DISPONTE semantics).  The resulting knowledge base defines a probability distribution over \emph{scenarios}: roughly speaking, a scenario is obtained by choosing, for each typicality inclusion, whether it is considered as true or false. As a consequence, two types of vagueness are expressed by a typicality inclusion $p \ :: \ \tip(C) \sqsubseteq D$: on the one hand, the typicality operator $\tip$ allows one to express that only typical members of $C$ are also members of $D$, on the other hand, $p$ allows one to have typical properties having different degrees of belief in each typicality inclusions. In a slight extension of the above example, we could have the need of representing that both the typicality inclusions about athletes and sumo wrestlers have a probability of  $80\%$, whereas we also believe that athletes  are usually young with a higher probability of $95\%$, with the following KB:

\begin{enumerate}
\item $\mathit{SumoWrestler} \sqsubseteq \mathit{Athlete}$ 
\item $0.8 \ :: \ \tip (\mathit{Athlete}) \sqsubseteq \mathit{Fit}$ 
\item $0.8 \ :: \ \tip (\mathit{SumoWrestler}) \sqsubseteq  \lnot \mathit{Fit}$
\item $0.95 \ :: \ \tip (\mathit{Athlete}) \sqsubseteq \mathit{YoungPerson}$
\end{enumerate}

We consider eight different scenarios,
representing all possible combinations of typicality inclusion: as an example, we can consider the one in which (2) and (4) hold, whereas (3) does not. In this scenario, we have, for instance, that also sumo wrestlers are  fit. We  equip each scenario with a degree depending on those of the involved typicality inclusions. In the above example, the scenario containing only (2) and (4) has a probability $15,2\%$, obtained by the product of $0.8$ (from 2), $1-0.8$ (since we do not include 3) and $0.95$ (from 4).


As an additional element of the proposed formalization we employ a method inspired by cognitive semantics [\cite{kamp1995prototype,smith1988combining,hampton1987inheritance,hampton1988overextension}] for the identification of a dominance effect between the concepts to be combined. Namely, for every combination, we distinguish a  HEAD and a MODIFIER, where the HEAD represents the stronger element of the combination.  The basic idea is as follows: given a KB and two concepts $C_H$ and $C_M$ occurring in it, where $C_H$ is the HEAD and $C_M$ the MODIFIER, we consider only {\em some}  scenarios in order to define a revised knowledge base, enriched by typical properties of the combined concept $C \sqsubseteq C_H \sqcap C_M$. Such scenarios are those (i) consistent with respect to the initial knowledge base, (ii) not \emph{trivial}, i.e.  we discard those with the highest probability, containing either \emph{all} properties that can be consistently ascribed to $C$ or \emph{all} properties of the HEAD that can be consistently ascribed to $C$, and (iii) giving preference to the typical properties of the HEAD $C_H$ (with respect to those of $C_M$) having the highest probability. 

 We are able to exploit the logic $\pfl$ in two different perspectives. On the one hand, we show that it is able to capture well established examples in the literature of cognitive science concerning concept combination and, as such, we argue that $\pfl$ is a promising candidate to tackle the problem of typicality-based concept combination (Sections \ref{sezione pet fish}, \ref{sezione linda} and \ref{sezione metafora}).
On the other hand, we use $\pfl$ as a tool for the generation and the exploration of novel creative concepts (Section \ref{sez:novel}), that could be useful in many applicative scenarios, ranging from video games to the creation of new movie or story characters. 

As a further result, we show that the proposed approach is essentially inexpensive, in the sense that reasoning in  $\pfl$ is \textsc{ExpTime}-complete as for the underlying standard $\alc$ Description Logic.

The plan of the paper is as follows. In Section \ref{sez:background} we recall the two semantics being the starting points of our proposal, namely the rational closure for the DL of typicality $\alct$ and the DISPONTE semantics for probabilistic DLs. In Section \ref{sez:pfl} we present the logic $\pfl$ for concept combination, and we show its reasoning complexity. In Section \ref{sez:esempi} we show that the proposed logic $\pfl$ is able to capture some well known and paradigmatic examples of concept combination coming from the  cognitive science literature. In Section \ref{sez:novel} we exploit the logic $\pfl$ in the application domain of computational creativity: i.e. we show how  $\pfl$ can be used for inventing/generating novel concepts as the result of the combination of two (or more) prototypes. In Section \ref{sez:iterativa} we proceed further by showing that the logic $\pfl$ can be iteratively applied to combine prototypical concepts already resulting from the combination of prototypes. 
 We conclude in Section \ref{sez:conclusioni} by mentioning some related approaches addressing the problem of common-sense concept combination, as well as by discussing on possible future works.

\section{Background: DLs of typicality and probabilistic DLs}\label{sez:background}
As introduced above, the main aim of this work is to introduce a nonmonotonic Description Logic able to deal with the combination of prototypical concepts. In order to achieve this goal, we exploit two well established logical frameworks:
\begin{itemize}
   \item the nonmonotonic logic of typicality $\alct$  based on a notion of rational closure for DLs.
   \item the DISPONTE semantics of probabilistic extensions of DLs
\end{itemize}
In this section we briefly recall such ingredients as well as Description Logics themselves, before introducing our proposal in Section \ref{sez:pfl}.

\subsection{Description Logics}

The family of Description Logics is one of the most important formalisms of knowledge representation. DLs are reminiscent of the early semantic networks and of frame-based systems (for a detailed introduction see [\cite{baader2003description}]. They offer two key advantages: on the one hand, they have a well-defined semantics based on first-order logic and, on the other hand, they offer a good
trade-off between expressivity of the language and computational complexity. DLs have been
successfully implemented by  a range of systems and they are at the
base of languages for the semantic web such as OWL (Ontology Web Language).


A  DL knowledge base (KB) comprises two components: 

\begin{itemize}
    \item the TBox,  containing the
definition of concepts (and possibly roles), and a specification of
inclusions relations among them;
\item the ABox containing
instances of concepts and roles, in other words,  properties and
relations of individuals.
\end{itemize}

\noindent As an example, consider the following knowledge base, whose TBox contains the inclusion relations:

\begin{quote}
    $\mathit{PhDStudent} \sqsubseteq \mathit{Student}$ \hfill (1) \\
    $\mathit{Student} \sqcap \mathit{Worker} \sqsubseteq \mathit{TaxPayer}$ \hfill (2) \\
    $\mathit{TaxPayer} \sqsubseteq \exists \mathit{hasId}.(\mathit{Code} \sqcup \mathit{NationalInsuranceNumber}) \hfill (3) $
\end{quote}

\noindent and whose ABox contains:
\begin{quote}
    $\mathit{PhDStudent}(\mathit{max})$ \hfill (4) \\
    $\mathit{Worker}(\mathit{max})$ \hfill (5) \\
    $\mathit{NationalInsuranceNumber}(12345)$ \hfill (6) \\
    $\mathit{hasId}(\mathit{max},12345) \hfill (7) $
\end{quote}

\noindent represents that PhD students are students (1), and that working students pay working taxes (2). Moreover, we have that tax payers possess an ID which is either a code or a NIN (National Insurance Number) (3). $\mathit{Student}$, $\mathit{PhDStudent}$, $\mathit{Worker}$, $\mathit{Code}$, $\mathit{TaxPayer}$, and $\mathit{NationalInsuranceNumber}$ are \emph{concepts}. $\mathit{hasId}$ is a \emph{role}, defining a relation between tax payers and IDs (codes and NINs). Furthermore, we have that Max is a both a student (4) and a worker (5), and he possesses the ID $12345$ (7) which is a NIN (6).

The basic Description Logic is called $\alc$. Concepts in $\alc$ are built from an alphabet of concept names $\mathtt{C}$ and of role names $\mathtt{R}$. Concept names define atomic concepts. Atomic concepts and role names are then combined in order to build complex concepts: $C \sqcap D$ (intersection), $C \sqcup D$ (union), $\lnot C$ (complement), $\forall R.C$ (universal qualified restriction) and $\exists R.C$ (existential qualified restriction). Extensions as well as fragments of $\alc$ have been studied by considering other sets of operators. 

Models in DLs are Kripke structures of the form $$\langle \Delta^\II, .^\II \rangle$$ where $\Delta^\II$ is a non empty set of elements called the \emph{domain}, whereas $.^\II$ is the \emph{extension function} mapping each atomic concept $C$ to the set of domain elements $C^\II$ being members of such concept, and mapping each atomic role $R$ to pairs of domain elements $R^\II$ being related by such role. For instance, $\mathit{Student}^\II$ is the set of students of belonging to the domain under consideration. The function $.^\II$ is then extended to complex concepts in order to provide a suitable semantics to each connective/operator, that is to say:
\begin{itemize}
    \item $(\lnot C)^\II = \Delta^\II \backslash C^\II$, i.e. the extension of $\lnot C$ is the complement of the extension of $C$, in other words it contains elements of $\Delta^\II$ not being $C$ elements;
    \item $(C \sqcap D)^\II=C^\II \cap D^\II$, for instance the set of working students $\mathit{Student} \sqcap \mathit{Worker}$ corresponds to the intersection of the extension of $\mathit{Student}$ and the extension of $\mathit{Worker}$;
    \item $(C \sqcup D)^\II=C^\II \cup D^\II$, for instance the extension of  $\mathit{Male} \sqcup \mathit{Female}$ corresponds to the union of the extension of $\mathit{Male}$ and the extension of $\mathit{Female}$;
    \item $(\exists R.C)^\II = \{x \in \Delta^\II \mid \exists (x,y) \in R^\II \ \mbox{such that} \ y \in C^\II\}$, for instance the extension of $\exists \mathit{tutoredBy}.\mathit{Professor}$ contains all the elements of the domain having \emph{at least} a professor as a tutor, i.e. those being related by role $\mathit{tutoredBy}$ to elements of the extension of $\mathit{Professor}$;
    \item $(\forall R.C)^\II = \{x \in \Delta^\II \mid \forall (x,y) \in R^\II \ \mbox{we have} \ y \in C^\II\}$, for instance the extension of $\forall \mathit{hasChild}.\mathit{Female}$ contains  parents whose children are \emph{all} female, in other words $x$ belongs to $(\forall \mathit{hasChild}.\mathit{Female})^\II$ if, for all pairs $(x,y)$ in $\mathit{hasChild}^\II$ we have that $y$ is a female (she belongs to the extension $\mathit{Female}^\II$).
\end{itemize}

One can reason about a DLs knowledge base by exploiting inference services like: \begin{itemize}
  \item satisfiability of the knowledge base: does the knowledge base admit a model?
  \item concept satisfiability: given a concept $C$, is there a model of the knowledge base assigning a non empty extension $C^\II$ to $C$?
  \item subsumption: given two concepts $C$ and $D$, is $C$ more general than $D$ (i.e. $D^\II \subseteq C^\II$) in any model of the knowledge base?
  \item instance checking: given an individual $a$ and a concept $C$, is $a$ an instance of $C$ (i.e. the domain element corresponding to $a$ belongs to $C^\II$) in any model of the knowledge base?
\end{itemize}

\noindent In the above example, one can infer, for instance, that the knowledge base is consistent, whereas it would be not if we added the inclusion $(*) \ \mathit{Student} \sqsubseteq \lnot \mathit{Worker}$ (students are not workers, therefore there does not exist any model for the knowledge base, since Max is a working student). The concept $\mathit{Worker} \sqcap \mathit{Student}$ (working student) is satisfiable, whereas it would be not in presence of the inclusion $(*)$.
Moreover, we can infer that Max is a tax payer $$\mathit{TaxPayer}(\mathit{max}),$$ and that all working students have an ID $$\mathit{Student} \sqcap \mathit{Worker} \sqsubseteq \exists \mathit{hasId}.(\mathit{Code} \sqcup \mathit{NationalInsuranceNumber}).$$

\subsection{Reasoning about typicality in DLs: the nonmonotonic logic $\alct$}
Since the very objective of the
TBox is to build a taxonomy of concepts, the need of representing
prototypical properties and of reasoning about defeasible
inheritance of such properties easily arises. The traditional
approach is to handle defeasible inheritance by integrating some
kind of nonmonotonic reasoning mechanism: this has led to study
nonmonotonic extensions of DLs [\cite{baader95a,donini,bonattilutz2}] with the objective of fulfilling the following desiderata for such an extension:

\begin{itemize}
\item The (nonmonotonic) extension must have a clear semantics and should be based on the same semantics as the underlying monotonic DL.
\item The extension should allow to specify prototypical properties in a natural and direct way.
\item The extension must be decidable, if  so is the underlying monotonic DL and, possibly,
 computationally effective.
\end{itemize}

\noindent A  simple but powerful nonmonotonic extension of DLs is proposed in
[\cite{AIJ,FI2009,dl2013,AIJ2014}]. In
this approach, ``typical'' or ``normal'' properties can be
directly specified  by means of a ``typicality'' operator $\tip$
enriching  the underlying DL.
The logic  so obtained is called $\alct$. The intuitive idea is that
$\tip(C)$ selects the {\em typical} instances of a concept $C$. We can therefore distinguish between the properties that
hold for all instances of concept $C$ ($C \sqsubseteq D$), and
those that only hold for the normal or typical instances of $C$
($\tip(C) \sqsubseteq D$).

The semantics of the $\tip$ operator can be given by means of a set of postulates that are  a reformulation of axioms and rules of nonmonotonic entailment in rational logic {\bf R} \ [\cite{whatdoes}]: in this respect an assertion of the form $\tip(C) \sqsubseteq D$ is equivalent to the conditional assertion $C \mathrel{{\scriptstyle\mid\!\sim}} D$ in {\bf R}. The basic ideas are as follows: given a domain $\Delta^\II$  and an evaluation function $.^\II$, one can define a function $f_\tip : Pow(\Delta^\II) \longmapsto Pow(\Delta^\II)$
that selects  the {\em typical} instances of any $S \subseteq \Delta^\II$; in
case $S = C^\II$ for a concept $C$, the selection function selects the typical instances
of $C$, namely: $$(\tip(C))^\II = f_\tip(C^\II).$$
$f_\tip$ has the following properties for all subsets $S$ of
$\Delta^\II$, that are essentially a restatement of the properties characterizing rational logic {\bf R}:

\vspace{0.2cm}
\noindent $(f_\tip-1) \ f_\tip(S) \subseteq S$ \\
\noindent $(f_\tip-2) \ \mbox{if} \ S \not=\emptyset \mbox{, then also} \ f_\tip(S) \not=\emptyset$ \\
\noindent  $(f_\tip-3) \ \mbox{if} \ f_\tip(S) \subseteq R, \mbox{then} \ f_\tip(S)=f_\tip(S \cap R)$ \\
\noindent  $(f_\tip-4) \ f_\tip(\bigcup S_i) \subseteq \bigcup f_\tip (S_i)$\\
\noindent  $(f_\tip-5) \ \bigcap f_\tip(S_i) \subseteq f_\tip(\bigcup S_i)$\\ 
\noindent  $(f_\tip-6) \ \mbox{if} \ f_\tip(S) \cap R \not=\emptyset, \mbox{then} \ f_\tip(S \cap R) \subseteq f_\tip(S)$
\vspace{0.2cm}

The semantics of the $\tip$ operator can be equivalently formulated in terms of
\emph{rational models} [\cite{AIJ2014}]: a model $\emme$ is any
structure $\langle \Delta^\II, <, .^\II \rangle$ where $\Delta^\II$ is the domain, 
 $<$ is an irreflexive, transitive, well-founded and modular (for all $x, y, z$ in $\Delta^\II$, if
$x < y$ then either $x < z$ or $z < y$) relation over
$\Delta^\II$. In this respect, $x < y$ means that $x$ is ``more normal'' than $y$, and that the typical members of a concept $C$ are the minimal elements of $C$ with respect to this relation. An element $x
\in \Delta^\II$ is a {\em typical instance} of some concept $C$ if $x \in
C^\II$ and there is no $C$-element in $\Delta^\II$ {\em more typical} than
$x$. Elements in $\Delta^\II$ are then organized in different ``levels'' or ``ranks'' by the modularity of $<$, where each element $x$ at rank $i$ is incomparable with each other (i.e., for each $y$ at rank $i$ neither $x<y$ nor $y<x$) and it is more normal than all elements with an higher rank $j>i$, therefore minimal elements of $C$ are those having the minimum rank.  In detail,  $.^\II$ is the extension function that maps each
concept $C$ to $C^\II \subseteq \Delta^\II$, and each role $R$
to  $R^\II \subseteq \Delta^\II \times \Delta^\II$. For concepts of
$\alc$, $C^\II$ is defined as usual.  For the $\tip$ operator, we have
$$(\tip(C))^\II = Min_<(C^\II),$$ where $Min_<(C^\II)=\{x \in C^\II \mid \not\exists y \in C^\II \ \mbox{s.t.}  \ y<x\}$.



 Given standard definitions of satisfiability of a KB in a model, we define a notion of entailment in $\alct$. Given a query $F$ (either an inclusion $C \sqsubseteq D$ or an assertion $C(a)$ or an assertion of the form $R(a,b)$), we say that $F$ is entailed from a KB if $F$ holds in all $\alct$ models satisfying KB.

Even if
the typicality operator $\tip$ itself  is nonmonotonic (i.e.
$\tip(C) \sqsubseteq E$ does not imply $\tip(C \sqcap D)
\sqsubseteq E$), what is inferred
from a KB can still be inferred from any KB' with KB $\subseteq$
KB', i.e. the logic $\alct$ is monotonic. In order to perform useful nonmonotonic inferences,   in
[\cite{AIJ2014}] the authors have strengthened  the above semantics by
restricting entailment to a class of minimal 
models. Intuitively, the idea is to
restrict entailment to models that \emph{minimize the atypical instances of a concept}. The resulting logic  corresponds to a notion of \emph{rational closure} on top of $\alct$. Such a notion is a natural extension of the rational closure construction provided  in [\cite{whatdoes}] for the propositional logic.

The nonmonotonic semantics of $\alct$ relies on minimal rational models  that
minimize the \emph{rank  of domain elements}. Informally, given two models of
KB, one in which a given domain element $x$ has rank 2 (because for instance
$z < y < x)$, and another in which it has rank 1 (because only
$y < x$), we prefer the latter,
as in this model the element $x$ is assumed to be ``more typical'' than in the former.


\begin{definition}[Rank of a domain element $k_{\emme}(x)$]\label{definition_rank}
Given a model $\emme=$$\langle \Delta^\II, <, .^\II\rangle$, the rank $k_{\emme}$  of a domain element $x \in \Delta^\II$, is the
length of the longest chain $x_0 < \dots < x$ from $x$
to a minimal $x_0$ (i.e. such that there is no ${x'}$ such that  ${x'} < x_0$).
\end{definition}

\begin{definition}[Minimal models]\label{Preference between models in case of fixed valuation} 
Given $\emme_1 = \langle \Delta^{\II_1}, <_1, .^{\II_1} \rangle$ and $\emme_2 =
\langle \Delta^{\II_2}, <_2, .^{\II_2} \rangle$ we say that $\emme_1$ is preferred to
$\emme_2$ if:
\begin{itemize}
\item $\Delta^{\II_1} = \Delta^{\II_2}$
\item $C^{\II_1} = C^{\II_2}$ for all concepts $C$
\item for all $x \in \Delta^{\II_1}$, it holds that $ k_{\emme_1}(x) \leq k_{\emme_2}(x)$ whereas
there exists $y \in \Delta^{\II_1}$ such that $ k_{\emme_1}(y) < k_{\emme_2}(y)$.
\end{itemize}
Given a knowledge base, we say that
$\emme$ is a minimal model  if it is a model satisfying the knowledge base and  there is no
$\emme'$ model satisfying it such that $\emme'$ is preferred to $\emme$.
\end{definition}

 Query entailment is then restricted to minimal {\em canonical models}. The intuition is that a canonical model contains all the individuals that enjoy properties that are consistent with KB. A model $\emme$ is a minimal canonical model of KB
if it satisfies KB, it is  minimal  and it is canonical\footnote{In Theorem 10 in [\cite{AIJ2014}] the authors have shown that for any consistent KB there exists a finite minimal canonical model of KB.}. A query $F$ is minimally entailed from a KB if it holds in all minimal canonical models of KB.

\begin{example}
   Let us consider and extend the example in the Introduction about athletes and sumo wrestlers. In the logic $\alct$ we can have a knowledge base whose TBox contains the following inclusions:

   \begin{quote}
 $\mathit{SumoWrestler} \sqsubseteq \mathit{Athlete}$ \\
 $\mathit{Athlete} \sqsubseteq \mathit{HumanBeing}$\\
 $\tip (\mathit{Athlete}) \sqsubseteq \mathit{Fit}$ \\
 $\tip (\mathit{SumoWrestler}) \sqsubseteq  \lnot \mathit{Fit}$\\
 $\tip (\mathit{Athlete}) \sqsubseteq \mathit{YoungPerson}$
\end{quote}

\noindent where standard inclusions are intended as usual in standard $\alc$: all sumo wrestlers are athletes, and all athletes are human beings. Typicality inclusions represent that, respectively, usually, athletes are  fit, typical sumo wrestlers are not  fit, and, normally, athletes are young persons.

The ABox contains the following facts about Roberto and Hiroyuki:
   \begin{quote}
 $\mathit{Athlete}(\mathit{roberto})$ \\
 $\mathit{SumoWrestler}(\mathit{hiroyuki})$
\end{quote}

\noindent used to represent that Roberto is an athlete, whereas Hiroyuky is a sumo wrestler.
Concerning its nonmonotonic reasoning capabilities, the logic $\alct$ allows one to infer from the above knowledge base that:
\begin{quote}
$\tip(Athlete) \sqsubseteq \lnot \mathit{SumoWrestler}$ \\
$\tip(\mathit{Athlete} \sqcap \mathit{Bald}) \sqsubseteq \mathit{Fit} \hfill (\mbox{\em irrelevance})$
\end{quote}
the last one stating that being bald is irrelevant with respect to being in fit. Furthermore, since we know that Hiroyuki is a sumo wrestler and Roberto is an athlete, we can infer  the following facts: 
\begin{quote}
$\mathit{Fit}(\mathit{roberto})$ \\
$\lnot \mathit{Fit}(\mathit{hiroyuky})\hfill (\mbox{\em specificity})$
\end{quote}
Observe that, in the last one, the logic gives preference to the most specific information (Hiroyuky is both an athlete and a sumo wrestler).
\end{example}

 In [\cite{AIJ2014}] it is shown that  query entailment in $\alct$  is in \textsc{ExpTime}.

\subsection{Probabilistic DLs: the DISPONTE semantics}
A probabilistic extension of Description Logics under the distribution semantics is proposed in [\cite{riguzzi}]. In this approach, called DISPONTE, the authors propose the integration of probabilistic information with DLs based on the distribution semantics for probabilistic logic programs [\cite{sato}]. The basic idea is to label inclusions of the TBox as well as facts of the ABox with a real number between 0 and 1, representing their probabilities (intended as \emph{degrees of belief}), assuming that each axiom is independent from each other. The resulting knowledge base defines a probability distribution over \emph{worlds}: roughly speaking, a world is obtained by choosing, for each axiom of the KB, whether it is considered as true of false. The distribution is further extended to queries and the probability of a query is obtained by marginalizing the joint distribution of the query and the worlds.

As an example, consider the following variant of the knowledge base inspired by the people and pets ontology  in [\cite{riguzzi}]:

\begin{quote}
   $ 0.3 \quad :: \quad \exists \mathit{hasAnimal}.\mathit{Pet} \sqsubseteq \mathit{NatureLover}$ \hfill(1)\\
   $ 0.6 \quad :: \quad \mathit{Cat} \sqsubseteq \mathit{Pet}$ \hfill(2)\\
   $ 0.9 \quad :: \quad \mathit{Cat}(\mathit{tom})$ \hfill(3)\\
   $ \mathit{hasAnimal}(\mathit{kevin},\mathit{tom})$ \hfill (4)
\end{quote}

\noindent The inclusion (1) expresses that individuals that own a pet are nature lovers with a 30\% probability, whereas  (2) is used to state that cats are pets with probability 60\%. The ABox fact (3) represents that Tom is a cat with probability 90\%. Inclusions (1), (2) and (3) are \emph{probabilistic} axioms, whereas (4) is a \emph{certain} axiom, that must always hold. The KB has the following eight possible worlds:
\begin{center}
    $\{ ((1),0), ((2),0), ((3),0) \}$ \\
    $\{ ((1),0), ((2),0), ((3),1) \}$ \\
    $\{ ((1),0), ((2),1), ((3),0) \}$ \\
    $\{ ((1),0), ((2),1), ((3),1) \}$ \\
    $\{ ((1),1), ((2),0), ((3),0) \}$ \\
    $\{ ((1),1), ((2),0), ((3),1) \}$ \\
    $\{ ((1),1), ((2),1), ((3),0) \}$ \\
    $\{ ((1),1), ((2),1), ((3),1) \}$
\end{center}
representing all possible combinations of considering/not considering each probabilistic axiom. For instance, the world $\{ ((1),1), ((2),0), ((3),1) \}$ represents the situation in which we have that (1) and (3) hold, i.e. $\exists \mathit{hasAnimal}.\mathit{Pet} \sqsubseteq \mathit{NatureLover}$ and $\mathit{Cat}(\mathit{tom})$, whereas (2) does not. The query $$\mathit{NatureLover}(\mathit{kevin})$$ is true only in the last world, i.e. having that (1), (2) and (3) are all true, whereas it is false in all the other ones. The probability of such a query is $P(\mathit{NatureLover}(\mathit{kevin}))=0.3 \times 0.6 \times 0.9=0.162$.

It is worth noticing that, in $\pfl$, we do not employ the whole characteristics of the DISPONTE semantics. In particular, as we will describe in detail in the next section, as far as the inferential capabilities are concerned, we exclusively adopt $\alct$, whereas we use DISPONTE only as a (necessary) ingredient to generate all different knowledge bases obtained by considering different subsets of typicality inclusions.

\section{ $\pfl$: A Logic for Commonsense Concept Combination}\label{sez:pfl}
In this section, we introduce a new nonmonotonic Description Logic $\pfl$ that combines the semantics based on the rational closure of $\alct$ [\cite{AIJ2014}] with the DISPONTE semantics [\cite{riguzzi,disponteijcai}] of probabilistic DLs.

By taking inspiration from [\cite{lieto2017dual,lieto2015common}], in our representational assumptions we consider two different types of properties associated to a given concept: rigid and typical. Rigid properties are those defining a concept, e.g. $C \sqsubseteq D$ (all $C$s are $D$s). Typical properties are represented by inclusions equipped by a degree of belief expressed through probabilities like in the DISPONTE Semantics. Additionally, as mentioned, we employ insights coming from the cognitive science for the determination of a dominance effect between the concepts to be combined, distinguishing between concept HEAD and MODIFIER. 
Since the conceptual combination is usually expressed via natural language we consider the following common situations: in a combination ADJECTIVE - NOUN (for instance, {\em red apple}) the HEAD is represented by the NOUN ({\em apple}) and the modifier by the ADJECTIVE ({\em red}). 
In the more complex case of NOUN-NOUN combinations (for instance, {\em pet fish}) usually the HEAD is represented by the last expressed concept ({\em fish} in this case). As we will see, however, in the NOUN-NOUN case (i.e. the one we will take into account in this paper) does not exists a clear rule to follow \footnote{It is worth-noting that a general framework for the automatic identification of a HEAD/MODIFIER combination is currently not available in literature. In this work we will take for granted that some methods for the correct identification of these pairs exist and we will focus on the reasoning part.}.

The language of $\pfl$ extends the basic DL $\mathcal{ALC}$ by \emph{typicality inclusions} of the form $\tip(C) \sqsubseteq D$ equipped by a real number $p \in (0.5,1]$ -- observe that the extreme $0.5$ is not included -- representing its degree of belief, whose  meaning is that ``we believe with degree/probability $p$ that, normally, $C$s are also $D$'' \footnote{The reason why we only allow typicality inclusions equipped with probabilities $p > 0.5$ is detailed in the following.}.

\begin{definition}[Language of $\pfl$]
We consider an alphabet of concept names $\mathtt{C}$, of role names
$\mathtt{R}$, and of individual constants $\mathtt{O}$.
Given $A \in \mathtt{C}$ and $R \in \mathtt{R}$, we define:

\vspace{0.2cm}

 $ C, D:= A \mid \top \mid \bot \mid  \lnot  C \mid  C \sqcap  C \mid  C \sqcup  C \mid \forall R. C \mid \exists R. C$
 
\vspace{0.2cm}

\noindent   We define a knowledge base $\kk=\langle \RR, \TT, \AAA \rangle$ where:

\noindent $\bullet$ $\RR$ is a finite set of rigid properties of the form $C \sqsubseteq D$;
    
\noindent $\bullet$ $\TT$ is a finite set of typicality properties of the form $$p \ :: \ \tip(C) \sqsubseteq D$$ where $p \in (0.5,1] \subseteq \mathbb{R}$ is the degree of belief of the typicality inclusion;
 
 \noindent $\bullet$ $\AAA$ is the ABox, i.e. a finite set of formulas of the form either $C(a)$ or $R(a,b)$, where $a, b \in \mathtt{O}$ and $R \in \mathtt{R}$.
\end{definition}

\begin{example}
   Let us consider and extend the previous example about athletes and sumo wrestlers. In the logic $\pfl$ we can have a knowledge base $\kk=\langle \RR, \TT, \AAA \rangle$ as follows:

\vspace{0.2cm}
\noindent $\RR$:
   \begin{itemize}
\item $\mathit{SumoWrestler} \sqsubseteq \mathit{Athlete}$ 
\item $\mathit{Athlete} \sqsubseteq \mathit{HumanBeing}$
\end{itemize}

\noindent $\TT$:
   \begin{itemize}
\item $0.8 \ :: \ \tip (\mathit{Athlete}) \sqsubseteq \mathit{Fit}$ 
\item $0.8 \ :: \ \tip (\mathit{SumoWrestler}) \sqsubseteq  \lnot \mathit{Fit}$
\item $0.95 \ :: \ \tip (\mathit{Athlete}) \sqsubseteq \mathit{YoungPerson}$
\end{itemize}

\noindent $\AAA$:
   \begin{itemize}
\item $\mathit{Athlete}(\mathit{roberto})$ 
\item $\mathit{SumoWrestler}(\mathit{hiroyuki})$
\end{itemize}

\noindent Rigid properties of $\RR$ are intended as usual in standard $\alc$: all sumo wrestlers are athletes, and all athletes are human beings.
Typicality properties of $\TT$ represent the following facts, respectively: 
\begin{itemize}
\item usually, athletes are  fit, and this fact has a degree of belief of $80\%$;
\item typical sumo wrestlers are not  fit with a degree of belief of $80\%$;
\item we believe with a degree of $95\%$ in the fact that, normally, athletes are young persons.
\end{itemize}

The ABox facts in $\AAA$ are used to represent that Roberto is an athlete, whereas Hiroyuky is a sumo wrestler.

We remind that, since we exploit the logic of typicality $\alct$, our logic $\pfl$ inherits its nonmonotonic reasoning capabilities. 
\end{example}

\noindent As mentioned above, we do not avoid typicality inclusions with degree 1. Indeed, an inclusion $1 \ :: \ \tip(C) \sqsubseteq D$ means that there is no uncertainty about a given typicality inclusion. On the other hand, since the very cognitive notion of typicality [\cite{rosch1975cognitive}] derives from that one of probability distribution \footnote{In the sense that prototypes are usually intended as statistically representative members of a category (e.g. the prototype of $\mathit{Bird}$ in, let's say, Europe is different with respect to the prototype of $\mathit{Bird}$ in New Zealand, i.e. a kiwi which do not fly, since the types of birds encountered by these two populations are statistically different and therefore the learned typical members differ as well). This assumption is also reflected in computational frameworks of cognitive semantics where prototypes for conceptual representations are naturally calculated/learned as centroids in vector space models of meaning[\cite{gardenfors2004conceptual, gardenfors2014geometry}].}, and this latter notion is also intrinsically connected to the one concerning the level of uncertainty/degree of belief associated to  typicality inclusions (i.e. typical knowledge is known to come with a low degree of uncertainty [\cite{lawry2009uncertainty}]), we only allow typicality inclusions equipped with degrees of belief $p > 0.5$\footnote{It is  worth noticing that  in our logic, the uncertain/graded component of typicality is captured by the \emph{ranked} semantics underlying the operator $\tip$ in the logic $\alct$. On the other hand, the epistemic uncertainty is modelled by the interpretation of probabilities of the DISPONTE semantics.}. Therefore,
in our effort of integrating two different semantics -- typicality based logic and DISPONTE  -- the choice of having degrees of belief/probabilities higher than $0.5$ for typicality inclusions seems to be the only one compliant with both the formalisms. In fact, despite the DISPONTE semantics [\cite{disponteijcai}] allows to assign also low degrees of belief to standard inclusions, in the logic $\pfl$, for what explained above, it would be at least counter-intuitive to also allow low degrees of belief for typicality inclusions (simply because typicality inclusions with high uncertainty do not describe any typical knowledge). For example, the logic $\pfl$ does not allow an inclusion like $0.3 \ :: \ \tip(\mathit{Student}) \sqsubseteq \mathit{YoungPerson}$, that could be interpreted as ``normally, students are not young people''. Please, note that this is not a limitation of the expressivity of the logic $\pfl$: we can in fact represent properties not holding for typical members of a category, for instance if one needs to represent that typical students are not married, we can have that $0.8 \ :: \ \tip(\mathit{Student}) \sqsubseteq  \lnot \mathit{Married}$, rather than  $0.2 \ :: \ \tip(\mathit{Student}) \sqsubseteq  \mathit{Married}$.

 Following the DISPONTE semantics, in $\pfl$  each typicality inclusion is independent from each other. This avoids the problem of dealing with degrees of inconsistent inclusions. Let us consider the following knowledge base:
\begin{quote}
   $\mathit{WorkingStudent} \sqsubseteq \mathit{Student}$\\
   $(i) \ 0.8 \ :: \ \mathit{Student} \sqsubseteq \lnot \mathit{WorkingTaxPayer}$ \\
   $(ii) \ 0.9 \ :: \ \mathit{WorkingStudent} \sqsubseteq \mathit{WorkingTaxPayer}$
\end{quote}

\noindent Also in the scenarios where both the conflicting typical inclusions $(i)$ and $(ii)$ are considered, the two degrees describe, respectively, that we believe that typical students do not pay working taxes with a degree of $80\%$, and that working students normally pay working taxes with a degree of $90 \%$, then those  inclusions are both acceptable due to the independence assumption. The two degrees will contribute to a definition of probability of such scenario (as we will describe in Definition \ref{def:scenario}). It is worth noticing that the underlying logic of typicality allows us to get for free the correct way of reasoning in this case, namely if the ABox contains the information that $\mathit{Mark}$ is a working student, we obtain that he pays working taxes, i.e. $\mathit{WorkingTaxPayer}(\mathit{Mark})$.

A model $\emme$ in the logic $\pfl$ extends standard $\mathcal{ALC}$ models by a preference relation among domain elements as in the logic of typicality [\cite{AIJ2014}]. In this respect, $x < y$ means that $x$ is ``more normal'' than $y$, and that the typical members of a concept $C$ are the minimal elements of $C$ with respect to this relation\footnote{It could be possible to consider an alternative semantics whose models are equipped with multiple preference relations, whence with multiple typicality operators. In this case, it should be possible to distinguish different aspects of exceptionality, however the approach based on a single preference relation in [\cite{AIJ2014}] ensures good computational properties (reasoning in the resulting nonmonotonic logic $\alct$ has the same complexity of the standard $\alc$), whereas adopting multiple preference relations could lead to higher complexities.}. An element $x
\in \Delta^\II$ is a {\em typical instance} of some concept $C$ if $x \in
C^\II$ and there is no $C$-element in $\Delta^\II$ {\em more normal} than
$x$. Formally:

\begin{definition}[Model of $\pfl$]
A model $\emme$ is any
structure $$\langle \Delta^\II, <, .^\II \rangle$$ where: 
\begin{itemize}
\item $\Delta^\II$ is a non empty set of items called the domain;
\item $<$ is an irreflexive, transitive, well-founded and modular (for all $x, y, z$ in $\Delta^\II$, if
$x < y$ then either $x < z$ or $z < y$) relation over
$\Delta^\II$;
\item $.^\II$ is the extension function that maps each atomic
concept $C$ to $C^\II \subseteq \Delta^\II$, and each role $R$
to  $R^\II \subseteq \Delta^\II \times \Delta^\II$, and is extended to complex concepts as follows:

\begin{itemize}
   \item $(\lnot C)^\II = \Delta^\II \ \backslash \ C^\II$
   \item $(C \sqcap D)^\II = C^\II \cap D^\II$
   \item $(C \sqcup D)^\II = C^\II \cup D^\II$
   \item $(\exists R.C)^\II = \{x \in \Delta^\II \mid \exists (x,y) \in R^\II \ \mbox{such that} \ y \in C^\II \}$
   \item $(\forall R.C)^\II = \{x \in \Delta^\II \mid \forall (x,y) \in R^\II \ \mbox{we have} \ y \in C^\II \}$
   \item $(\tip(C))^\II = Min_<(C^\II),$ where $Min_<(C^\II)=\{x \in C^\II \mid \not\exists y \in C^\II \ \mbox{s.t.}  \ y<x\}$.
\end{itemize}
\end{itemize}
\end{definition}

\noindent A model $\emme$  can be equivalently defined by postulating the existence of
a function $k_{\emme}: \Delta^\II \longmapsto \mathbb{N}$, where $k_{\emme}$ assigns a finite rank to each domain element [\cite{AIJ2014}]: the rank of $x$ is the
length of the longest chain $x_0 < \dots < x$ from $x$
to a minimal $x_0$, i.e. such that there is no ${x'}$ such that  ${x'} < x_0$. The rank function $k_{\emme}$ and $<$ can be defined from each other  by letting $x < y$ if and only if $k_{\emme}(x) < k_{\emme}(y)$.


\begin{definition}[Model satisfying a knowledge base in $\pfl$]\label{semantica}
Let $\kk=\langle \RR, \TT, \AAA \rangle$
be a KB. Given a model $\emme=\langle \Delta^{\mathcal{I}}, <, .^{\mathcal{I}}\rangle$, we assume that $.^\II$ is extended  to assign a  domain element
$a^{\mathcal{I}}$ of $\Delta^\II$ to each individual constant $a$ of $\mathtt{O}$.
We say that:
\begin{itemize}
\item $\emme$  satisfies $\RR$ if, for all $C \sqsubseteq D \in \RR$, we have $C^\II \subseteq D^\II$;
\item $\emme$ satisfies $\TT$ if, for all $q \ :: \ \tip(C) \sqsubseteq D \in \TT$, we have that\footnote{It is worth noticing that here the degree $q$ does not play any role. Indeed, a typicality inclusion $\tip(C) \sqsubseteq D$ holds in a model only if it satisfies the semantic condition of the underlying DL of typicality, i.e. minimal (typical) elements of $C$ are elements of $D$. The degree of belief $q$ will have a crucial role in the application of the distributed semantics, allowing the definition of scenarios as well as the computation of their probabilities.} $\tip(C)^\II \subseteq D^\II$, i.e. $Min_<(C^\II) \subseteq D^\II$; 
\item $\emme$ satisfies $\AAA$ if, for all assertion $F \in \AAA$, if $F = C(a)$ then $a^{\mathcal{I}} \in C^{\mathcal{I}}$, otherwise
          if $F = R(a,b)$ then $(a^{\mathcal{I}},b^{\mathcal{I}}) \in R^{\mathcal{I}}$.
\end{itemize}
\end{definition}

Even if the typicality operator $\tip$ itself  is nonmonotonic (i.e.
$\tip(C) \sqsubseteq E$ does not imply $\tip(C \sqcap D)
\sqsubseteq E$), what is inferred
from a KB can still be inferred from any KB' with KB $\subseteq$
KB', i.e. the resulting logic is monotonic. As already mentioned in Section \ref{sez:background}, in order to perform useful nonmonotonic inferences,   in
[\cite{AIJ2014}] the authors have strengthened  the above semantics by
restricting entailment to a class of minimal 
models. Intuitively, the idea is to
restrict entailment to models that \emph{minimize the atypical instances of a concept}. The resulting logic corresponds to a notion of \emph{rational closure} on top of $\alct$. Such a notion is a natural extension of the rational closure construction provided  in [\cite{whatdoes}] for the propositional logic.  This nonmonotonic semantics relies on minimal rational models  that minimize the \emph{rank  of domain elements}. Informally, given two models of KB, one in which a given domain element $x$ has rank 2 (because for instance
$z < y < x)$, and another in which it has rank 1 (because only
$y < x$), we prefer the latter,
as in this model the element $x$ is assumed to be ``more typical'' than in the former.
Query entailment is then restricted to minimal {\em canonical models}. The intuition is that a canonical model contains all the individuals that enjoy properties that are consistent with KB. This is needed when reasoning about the rank of the concepts: it is important to have them all represented. 
A query $F$ is minimally entailed from a KB if it holds in all minimal canonical models of KB.
 In [\cite{AIJ2014}] it is shown that  query entailment in the nonmonotonic $\alct$ 
 is in \textsc{ExpTime}.

\begin{definition}[Entailment]\label{entailment in pfl}
Let $\kk=\langle \RR, \TT, \AAA \rangle$
be a KB and let $F$ be either $C \sqsubseteq D$ ($C$ could be $\tip(C')$) or $C(a)$ or $R(a,b)$. We say that $F$ follows from $\kk$ if, for all minimal $\emme$ satisfying $\kk$, then $\emme$ also satisfies $F$. 
\end{definition}

\noindent Let us now define the notion of \emph{scenario} of the composition of concepts. Intuitively, a scenario is a knowledge base obtained by adding to all rigid properties in $\RR$ and to all ABox facts in $\AAA$ only \emph{some} typicality properties. More in detail, we define an {\em atomic choice} on each typicality inclusion, then we define a {\em selection} as a set of atomic choices in order to select which typicality inclusions have to be considered in a scenario.

\begin{definition}[Atomic choice]
Given $\kk=\langle \RR, \TT, \AAA \rangle$, where $\TT = \{ E_1 = q_1 \ :: \tip(C_1) \sqsubseteq D_1,  \dots, E_n = q_n \ :: \tip(C_n) \sqsubseteq D_n \}$ we define ($E_i$, $k_i$) an \emph{atomic choice}, where $k_i \in \{0, 1\}$. 
\end{definition}

\begin{definition}[Selection]\label{def:selection}
Given  $\kk=\langle \RR, \TT, \AAA \rangle$, where $\TT = \{ E_1 = q_1 \ :: \tip(C_1) \sqsubseteq D_1,  \dots, E_n = q_n \ :: \tip(C_n) \sqsubseteq D_n \}$ and a set of atomic choices $\nu$, we say that $\nu$ is a \emph{selection} if, for each $E_i$, one decision is taken, i.e. either ($E_i$, 0) $\in \nu$ and ($E_i$, 1) $\not\in \nu$ or  ($E_i$, 1) $\in \nu$ and ($E_i$, 0) $\not\in \nu$ for $i=1, 2, \dots, n$. The probability of  $\nu$ is $P(\nu) = \prod\limits_{(E_i,1) \in \nu} q_i \prod\limits_{(E_i,0) \in \nu} (1-q_i)$.
\end{definition}

\begin{definition}[Scenario]\label{def:scenario}
Given  $\kk=\langle \RR, \TT, \AAA \rangle$, where $\TT = \{ E_1 = q_1 \ :: \tip(C_1) \sqsubseteq D_1, \dots, E_n = q_n \ :: \tip(C_n) \sqsubseteq D_n \}$ and given a selection $\sigma$, we define a \emph{scenario} $w_\sigma=\langle \RR,  \{E_i \mid (E_i, 1) \in \sigma\}, \AAA \rangle.$  
We also define the probability of a scenario $w_\sigma$ as the probability of the corresponding selection, i.e. $P(w_\sigma)=P(\sigma)$.
Last, we say that a scenario is \emph{consistent} with respect to $\kk$ when it admits a model in the logic $\pfl$ satisfying $\kk$.
\end{definition} 

\noindent We denote with $\allworlds$ the set of all scenarios, essentially the set of knowledge bases obtained by considering all possible subsets of typicality inclusions in $\TT$. It immediately follows that the probability of a scenario $P(w_\sigma)$ is a probability distribution over scenarios, that is to say $\sum\limits_{w \in \allworlds} P(w) = 1$.

\noindent Given a KB $\kk=\langle \RR, \TT, \AAA \rangle$ and given two concepts $C_H$ and $C_M$ occurring in $\kk$, our logic allows defining the compound concept $C$ as the combination of the HEAD $C_H$ and the MODIFIER $C_M$, where  the typical properties of the form $\tip(C) \sqsubseteq D$ (or, equivalently, $\tip(C_H \sqcap C_M) \sqsubseteq D$) to ascribe to the concept $C$ are obtained in the set of scenarios that: 

\begin{enumerate}
   \item are consistent with respect to $\kk$ in presence of at least a $C$-element, in other words the knowledge base 
   extending $\kk$ with the properties ascribed to the combined concept $C$ in the scenario, i.e. $\langle \RR, \TT \cup \ \{\tip(C_H \sqcap C_M) \sqsubseteq D \mid \mbox{either} \ \tip(C_H) \sqsubseteq D \in w_\sigma \ \mbox{or} \ \tip(C_M) \sqsubseteq D \in w_\sigma\}, \AAA \ \cup \ \{(C_H \sqcap C_M)(x)\} \rangle$, where $x$ does not occur in $\AAA$, admits a model in $\pfl$; 
   \item are not trivial, i.e. the scenarios with the highest probability considering either \emph{all} properties that can be consistently ascribed to $C$ are discarded or \emph{all} properties of the HEAD that can be consistently ascribed to $C$ are discarded \footnote{This choice is motivated by the challenges provided by the task of commonsense conceptual combination itself: in order to generate plausible novel compounds it is necessary to maintain a certain level of ``surprise'' in the combination, since obvious inheritance of attributes does not have any explanatory power for human-like and human-level concept combination [\cite{hampton1987inheritance}]. For this reason, both scenarios inheriting all the properties of the two concepts and all the properties of the HEAD are discarded. In this respect, the typicality-based inheritance procedure of our logic falls within the so-called \emph{functional compositionality}, as introduced in [\cite{pelletier}].};
   \item are those giving preference to the typical properties of the HEAD $C_H$ (with respect to those of the MODIFIER $C_M$) with the highest probability, that is to say a scenario $w$ is discarded if, in case of conflicting properties $D$ and $\lnot D$, $w$ contains an inclusion $p_1 \ :: \ \tip(C_M) \sqsubseteq \lnot D$ whereas it does not include another inclusion $p_2 \ :: \ \tip(C_H) \sqsubseteq  D$.
\end{enumerate}

\noindent In order to select the resulting scenarios we apply points 1, 2, and 3 above to blocks of scenarios with the same probability, in decreasing order starting from the highest one. More in detail, we first discard all the inconsistent scenarios, then we consider the remaining (consistent) ones in decreasing order by their probabilities. We then consider the blocks of scenarios with the same probability, and we proceed as follows:
\begin{itemize}
   \item we discard those considered as \emph{trivial}, consistently inheriting  all the properties from the HEAD (therefore, also scenarios inheriting all the properties of HEAD and MODIFIER are discarded) from the starting concepts to be combined;
   \item among the remaining ones, we discard those inheriting properties from the MODIFIER in conflict with properties that could be consistently inherited from the HEAD;
   \item if the set of scenarios of the current block is empty, i.e. all the scenarios have been discarded either because trivial or because preferring the MODIFIER, we repeat the procedure by considering the block of scenarios, all having the immediately lower probability;
   \item the set of remaining scenarios are those selected by the logic $\pfl$.
\end{itemize}

 More formally, our mechanism is described in Algorithm \ref{algoritmo}. Please note that this block-based procedure extends a previously developed method that simply selected the consistent scenarios with the probability range immediately lower to the non-trivial ones [\cite{ismis2018}]. Notice also that, in the initial knowledge base $\kk$, we have that the set of typicality inclusions is $\TT' \ \cup \ \{q_1 \ :: \ \tip(C_H) \sqsubseteq D_1, \dots, q_k \ :: \ \tip(C_H) \sqsubseteq D_k, q_{k+1} \ :: \ \tip(C_M) \sqsubseteq D_{k+1}, q_n \ :: \ \tip(C_M) \sqsubseteq D_n\}$, where $\TT'$ does not contain neither inclusions of the HEAD of the form $q_i \ :: \ \tip(C_H) \sqsubseteq D_i$ nor inclusions of the MODIFIER of the form $q_i \ :: \ \tip(C_M) \sqsubseteq D_i$ (Algorithm \ref{algoritmo}, line 2).
 
\begin{algorithm*}[!ht]
\caption{Algorithm for concept combination}\label{algoritmo}
\begin{algorithmic}[1]
\Procedure{ConceptCombination}{$\kk = \langle \RR, \TT, \AAA \rangle, C_H, C_M$}
\State let $\TT$ be $\TT' \ \cup \ \{q_1 \ :: \ \tip(C_H) \sqsubseteq D_1, \dots, q_k \ :: \ \tip(C_H) \sqsubseteq D_k, q_{k+1} \ :: \ \tip(C_M) \sqsubseteq D_{k+1}, q_n \ :: \ \tip(C_M) \sqsubseteq D_n\}$
\State Sel $\gets$ compute all the $2^n$ possible selections 
\State Snr $\gets \emptyset$ \Comment{Compute all $2^n$ scenarios}
\For {each $\nu \in$ Sel}    
   \State $P(\nu) \gets 1$    \Comment{Compute the probability of the current selection/scenario}
   \For {$i=1, \dots, n$}
      \If {$\nu_i = 1$}   
      	\State $P(\nu) \gets P(\nu) \times q_i$
        \Else
          \State $P(\nu) \gets P(\nu) \times (1-q_i)$
       \EndIf
   \EndFor
   \State $\TT_\nu \gets \emptyset$  \Comment{Build the scenario corresponding to $\nu$}
   \For {$i=1, \dots, n$} 
      \If {$\nu_i = 1$}
      	\State $\TT_\nu \gets \TT_\nu \cup \{q_i \ :: \ \tip(C_H \sqcap C_M) \sqsubseteq D_i\}$   
      \EndIf
   \EndFor
   \State Snr $\gets$ Snr $\cup \ \{ \langle \RR, \TT' \cup \TT_\nu, \AAA \rangle \}$
\EndFor
\State ConsSnr $\gets \emptyset$ \Comment{Discard inconsistent scenarios (reasoning in $\alct$)}
\For {each $\langle \RR, \TT \cup \TT_\nu, \AAA \rangle \in$ Snr} 
   \If {$\langle \RR, \TT \cup \TT_\nu, \AAA \ \cup \{(C_H \sqcap C_M)(x)\}\rangle$ is consistent in $\alct$}  
   \Comment{$x$ does not occur in $\AAA$}
       \State ConsSnr $\gets$ ConsSnr $ \ \cup \ \{\langle \RR, \TT \cup \TT_\nu, \AAA \rangle\}$
   \EndIf
\EndFor
\State Ord $\gets$ order scenarios in ConsSnr by probabilities $P(\nu)$ in a decreasing order
\State ResultingSnr $\gets \ \emptyset$   \Comment{The set ResultingSnr will contain the selected scenarios}
\While {ResultingSnr $== \emptyset$}   \Comment{Continue with the next block of scenarios}
   \State $w$ $\gets$ first scenario in Ord
   \State Max $\gets$ $P(w)$ \Comment{Highest probability in Ord}
   \State CurrentBlock $\gets \{w\}$  \Comment{Build the current set scenarios with the highest probability}
   \While{$P(w) ==$ Max}
      \State $w$ $\gets$ next scenario in Ord  \Comment{$w$ is not removed from Ord} 
      \If{$P(w) ==$ Max} 
         \State CurrentBlock $\gets$ CurrentBlock $ \ \cup \ \{w\}$
         \State remove $w$ from Ord
      \EndIf
   \EndWhile
   \For {each $w \in$ CurrentBlock}
        \If {$w$ not contains  all properties of $C_H$} \Comment{Trivial scenario to be discarded}
           \If {\textsc{ConflictHeadModifier}($w$,$\kk$,$C_H$,$C_M$) $==\mathit{false}$} 	\State \Comment{Scenario preferring MODIFIER to HEAD (see Algorithm \ref{algoritmoconflitti}). Scenario to be discarded}
		\State ResultingSnr $\gets$ ResultingSnr $\ \cup \ \{ w \}$ \Comment{Selected scenario}
           \EndIf
        \EndIf
   \EndFor
\EndWhile
\State \Return ResultingSnr
\EndProcedure
\end{algorithmic}
\end{algorithm*}

\begin{algorithm*}[!ht]
\caption{Algorithm  checking for a scenario giving preference to the HEAD}\label{algoritmoconflitti}
\begin{algorithmic}[1]
\Procedure{ConflictHeadModifier}{$w$,$\kk$,$C_H$,$C_M$}
   \For {each $q_j \ :: \ \tip(C_M) \sqsubseteq D \in w$}
      \For {each $q_k \ :: \ \tip(C_H) \sqsubseteq E \in \kk$ such that $q_k \ :: \ \tip(C_H) \sqsubseteq E \not\in w$}
         \If {$\kk \ \cup \ \{D(x), E(x)\}$ is inconsistent in $\alct$, $x \not\in \kk$}  
         \State \Comment{Properties $D$ and $E$ are contradictory in $\alct$}
            \State \Return $\mathit{true}$
         \EndIf
       \EndFor
   \EndFor
\State \Return $\mathit{false}$
\EndProcedure
\end{algorithmic}
\end{algorithm*}

 Lastly, we define the ultimate output of our mechanism: a knowledge base in the logic $\pfl$ whose set of typicality properties is enriched by those of the compound concept $C$.
Given a scenario $w$ satisfying the above properties, we define the properties of  $C$ as the set of inclusions $p \ :: \ \tip(C) \sqsubseteq D$, for all $\tip(C) \sqsubseteq D$ that are entailed (Definition \ref{entailment in pfl}) from $w$ in the logic $\pfl$. The probability $p$ is such that:
\begin{itemize}
 \item if $\tip(C_H) \sqsubseteq D$ is entailed from $w$, that is to say $D$ is a property inherited either from the HEAD (or from both the HEAD and the MODIFIER),  then $p$ corresponds to the probability of such inclusion of the HEAD in the initial knowledge base, i.e. $p \:: \ \tip(C_H) \sqsubseteq D \in \TT$;
 \item otherwise, i.e. $\tip(C_M) \sqsubseteq D$ is entailed from $w$, then $p$ corresponds to the probability of such inclusion of a MODIFIER in the initial knowledge base, i.e. $p \:: \ \tip(C_M) \sqsubseteq D \in \TT$.
\end{itemize}

The knowledge base obtained as the result of combining concepts $C_H$ and $C_M$ into the compound concept $C$ is called $C$-\emph{revised} knowledge base, and it is defined as follows: 
 $$\kk_C=\langle \RR, \TT \cup \{p \:: \ \tip(C) \sqsubseteq D\}, \AAA \rangle,$$  for all $D$ such that either $\tip(C_H) \sqsubseteq D$ is entailed in $w$ or $\tip(C_M) \sqsubseteq D$ is entailed in $w$ by Definition \ref{entailment in pfl}, and $p$ is defined as above.

Let us now define the probability that a query is entailed from a $C$-revised knowledge base. We restrict our concern to ABox facts. The intuitive idea is that, given a query $F$ of the form $A(a)$ and its associated probability $p$, the probability of $F$ is the product of $p$ and the probability of the inclusion in the $C$-revised knowledge base which is responsible for that. 

\begin{definition}[Probability of query entailment]\label{probability query}
Given a knowledge base $\kk= \langle \RR, \TT, \AAA \rangle$, the $C$-revised knowledge base $\kk_C$, a query $A(a)$ and its probability $p \in (0,1]$, we define the probability of the entailment of the query $A(a)$ from $\kk_C$, denoted as $\PP(A(a),p)$ as follows:

\begin{itemize}
\item  $\PP(A(a),p)=0$, if $A(a)$ is \emph{not} entailed from $\kk_C$;
 
\item $\PP(A(a),p)=p \times q$, where either $q \ :: \ \tip(C) \sqsubseteq A$ belongs to $\kk_C$ or $q \ :: \ \tip(C) \sqsubseteq D$ belongs to $\kk_C$ and $D \sqsubseteq A$ is entailed from $\RR$ in standard $\mathcal{ALC}$, otherwise.
\end{itemize}
\end{definition}


\noindent We conclude this section by showing that reasoning in $\pfl$ remains in the same complexity class of standard $\alc$ Description Logics. 
\begin{theorem}
Reasoning in $\pfl$ is \textsc{ExpTime}-complete.
\end{theorem}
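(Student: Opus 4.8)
The plan is to establish the \textsc{ExpTime}-completeness by separately proving membership in \textsc{ExpTime} and \textsc{ExpTime}-hardness. The hardness direction is immediate: since $\pfl$ conservatively extends standard $\alc$ (a knowledge base with an empty typicality set $\TT$ and rigid properties coinciding with an arbitrary $\alc$ TBox reduces $\alc$ reasoning to $\pfl$ reasoning), and since satisfiability in $\alc$ is already \textsc{ExpTime}-hard, the lower bound transfers directly. So the substance of the proof lies entirely in the upper bound.

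For membership in \textsc{ExpTime}, the key observation is that all reasoning tasks in $\pfl$ are ultimately reduced to reasoning in the underlying logic $\alct$, for which query entailment is already known to be in \textsc{ExpTime} (as recalled from [\cite{AIJ2014}] earlier in the excerpt). First I would analyze Algorithm \ref{algoritmo}. The algorithm enumerates all $2^n$ selections, where $n$ is the number of typicality inclusions in $\TT$. For each scenario it performs a consistency check in $\alct$ (line testing whether $\langle \RR, \TT \cup \TT_\nu, \AAA \cup \{(C_H \sqcap C_M)(x)\}\rangle$ is consistent), and for the surviving scenarios it invokes \textsc{ConflictHeadModifier}, which in turn performs further $\alct$ consistency checks (one for each pair of conflicting HEAD/MODIFIER inclusions). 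The crucial point is that the number of scenarios $2^n$ is at most exponential in the size of the knowledge base, and each individual $\alct$ consistency check runs in \textsc{ExpTime}. The remaining bookkeeping — computing selection probabilities, sorting scenarios by probability, grouping them into blocks, and checking the triviality condition — is polynomial in the number of scenarios and in the size of the knowledge base.

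The argument then reduces to the standard fact that an exponential number of \textsc{ExpTime} subroutine calls, combined with polynomial-time bookkeeping, still yields an overall \textsc{ExpTime} procedure: exponentially many calls to an \textsc{ExpTime} oracle can be simulated sequentially, since $2^{n} \cdot 2^{p(m)} = 2^{n + p(m)}$ remains a single exponential in the input size $m$ (here $n \leq m$). I would make this explicit by bounding the total running time as the product of the number of scenarios, the cost of each $\alct$ check, and the number of conflict checks per scenario, observing that each factor is at most single-exponential and their product therefore remains single-exponential. Finally, I would note that the construction of the $C$-revised knowledge base $\kk_C$ and the computation of query-entailment probabilities (Definition \ref{probability query}) again amount to a polynomial number of $\alct$ entailment checks, each in \textsc{ExpTime}, so they do not increase the complexity.

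The main obstacle to watch for is making sure the exponential blow-up is \emph{only} in the number of typicality inclusions and does not compound multiplicatively with the exponential cost of the $\alct$ reasoning in a way that would push the bound to double-exponential. The careful accounting is that $2^{n}$ and the \textsc{ExpTime} bound $2^{p(m)}$ of each subroutine \emph{add} in the exponent rather than multiply, precisely because the $2^{n}$ scenarios are processed one at a time rather than nested inside one another. Establishing this additive-exponent bound rigorously, together with verifying that no step of Algorithms \ref{algoritmo} and \ref{algoritmoconflitti} hides a super-exponential cost (in particular that the block-sorting and triviality tests are genuinely polynomial in the number of scenarios), is where the proof needs to be precise.
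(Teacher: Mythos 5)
Your proposal is correct and follows essentially the same route as the paper's own proof: hardness from the fact that $\pfl$ extends $\alc$, and membership by observing that exponentially many scenarios each require an \textsc{ExpTime} consistency check in $\alct$, which composes to a single exponential. Your version is simply a more careful and explicit accounting of the same argument.
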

\begin{proof}
For the completeness, let $n$ be the size of KB, then the number of typicality inclusions is $O(n)$. It is straightforward to observe that we have an exponential number of different scenarios, for each one we need to check whether the resulting KB is consistent in $\alct$ which is \textsc{ExpTime}-complete. Hardness immediately follows form the fact that $\pfl$ extends standard $\alc$. Reasoning in the revised knowledge base relies on reasoning in $\alct$, therefore we can conclude that
reasoning in $\pfl$ is \textsc{ExpTime}-complete.
\end{proof}

\section{Applications of the logic $\pfl$}\label{sez:esempi}
We propose three different types of examples adopting the logic $\pfl$, along with its embedded HEAD-MODIFIER heuristic, to model the phenomenon of typicality-based conceptual combination. In the first case ({\em pet fish}) we show how our logic is able to handle, under certain assumptions, this concept composition which is problematic for other formalisms. In the second case ({\em Linda the feminist bank teller}) we show how $\pfl$ is able to model the well known conjunction fallacy problem [\cite{tversky1983extensional}]. In the third case ({\em stone lion}) we show how our logic is also able to account for complex form of metaphorical concept combination. All these examples do not come \emph{ex-abrupto}, since they represent classical challenging cases to model in the field of cognitive science and cognitive semantics (see e.g. [\cite{lewis2016hierarchical}]) and have been showed in that past problematic to model by adopting other kinds of logics (for example fuzzy logic, [\cite{osherson1981adequacy,smith1984conceptual,hampton2011conceptual}]). 

In addition, we  exploit $\pfl$ to present an example of a possible application in the area of creative generation of new characters.
Finally, we show that the logic $\pfl$ can be iteratively applied to combine concepts already resulting from the combination of concepts. This type of iterative process has been never provided in previous formalizations trying to address similar or the very same phenomena, (e.g. in [\cite{lewis2016hierarchical,eppe2018computational}]). We show that the procedures provided in $\pfl$ are robust and consistent enough also for dealing with higher, iterative, levels of prototype-based compositionality.



\subsection{Pet Fish}\label{sezione pet fish}
In this section we exploit, in two different setups, the logic $\pfl$ in order to define the typical properties of the concept \emph{pet fish}, obtained as the combination of the concepts \emph{Pet} and \emph{Fish}. As mentioned before, this represents a well known and  paradigmatic example in cognitive science. The problem of combining the prototype of a pet with those of a fish is the following: a typical pet is affectionate and warm, whereas a pet fish is not; on the other hand, as a difference with a typical fish, a pet fish is not greyish, but it inherits its being scaly.

Let us consider, as a first setup, the following situation: let $\kk=\langle \RR, \TT, \AAA \rangle$ be a  KB, where the ABox $\AAA$ is empty, the set of rigid inclusions is
    $$\RR=\{\mathit{Fish} \sqsubseteq \forall \mathit{livesIn}.\mathit{Water}\}$$
and the set of typicality properties $\TT$ is as follows:
\begin{quote}
  \begin{enumerate}
      \item $0.8  \ :: \ \tip(\mathit{Fish}) \sqsubseteq \lnot \mathit{Affectionate}$ 
    \item $0.6 \ :: \ \tip(\mathit{Fish}) \sqsubseteq \mathit{Greyish}$ 
    \item $0.9 \ :: \ \tip(\mathit{Fish}) \sqsubseteq \mathit{Scaly}$ 
    \item $0.8 \ :: \ \tip(\mathit{Fish}) \sqsubseteq \lnot \mathit{Warm}$ 
\item $0.9 \ :: \ \tip(\mathit{Pet}) \sqsubseteq \forall \mathit{livesIn}.(\lnot \mathit{Water})$
   \item $0.8  \ :: \ \tip(\mathit{Pet}) \sqsubseteq \mathit{Affectionate}$ 
    \item $0.8 \ :: \ \tip(\mathit{Pet}) \sqsubseteq \mathit{Warm}$ 
  \end{enumerate}
\end{quote}
\vspace{-0.1cm}


\noindent By the properties of the typicality operator $\tip$, reasoning relying on the underlying $\alct$ we have that $$(*) \ \tip(\mathit{Pet} \sqcap \mathit{Fish}) \sqsubseteq \forall \mathit{livesIn}.\mathit{Water}.$$ Indeed, $\mathit{Fish} \sqsubseteq \forall \mathit{livesIn}.\mathit{Water}$ is a rigid property, which is always preferred to a typical one: in this case, additionally, the rigid property is also associated to the HEAD element fish.
 Therefore, this element is reinforced.
 
 Since $\mid \TT \mid = 7$, we have $2^7=128$ different scenarios.  
We can observe that some of them are not consistent\footnote{The inconsistency arises when the knowledge base is extended by two contradicting properties for the combined concept, for instance the knowledge base extended by both $\tip(\mathit{Pet} \sqcap \mathit{Fish}) \sqsubseteq \mathit{Warm}$ and $\tip(\mathit{Pet} \sqcap \mathit{Fish}) \sqsubseteq \lnot \mathit{Warm}$ would be consistent only if there are no pet fishes.}, more precisely those 
\begin{itemize}
\item[(i)] containing the inclusion $5$, thus contradicting $(*)$;
\item[(ii)] containing both inclusions $1$ and $6$;
\item[(iii)] containing both inclusions $4$ and $7$.    
\end{itemize}
It is worth noticing that this setting of the example represents one of worst cases in our analysis: indeed, the probabilities associated to the properties in $\TT$ related to the MODIFIER are not lower than the ones associated to the properties in $\TT$ related to the HEAD\footnote{It is worth noticing that the logic $\pfl$ would select the same scenario also in the more challenging situation in which degrees of properties of the HEAD are strictly lower, for instance in case inclusion $1$ would be replaced in $\TT$ by $0.7 \ :: \ \tip(\mathit{Fish}) \sqsubseteq \lnot \mathit{Affectionate}$.}. 

The scenario with the highest probability (up to $20 \%$) is both trivial and inconsistent: indeed, since probabilities $p_i$ equipping typicality inclusions are such that $p_i > 0.5$ by definition, we immediately have that the higher is the number of inclusions belonging to a scenario the higher is the associated probability. Since typicality inclusions introduce properties that are pairwise inconsistent, it follows that such scenarios must be discarded (Algorithm \ref{algoritmo}, from line 17).


As described in the previous section, we consider the other blocks of consistent scenarios considering their probabilities in descending order (Algorithm \ref{algoritmo}, lines from 21). Figure \ref{pfffigure} shows the $2^7 = 128$ different scenarios, one row for each scenario. 

\begin{figure}
\centering
\includegraphics[width=1.2\textwidth]{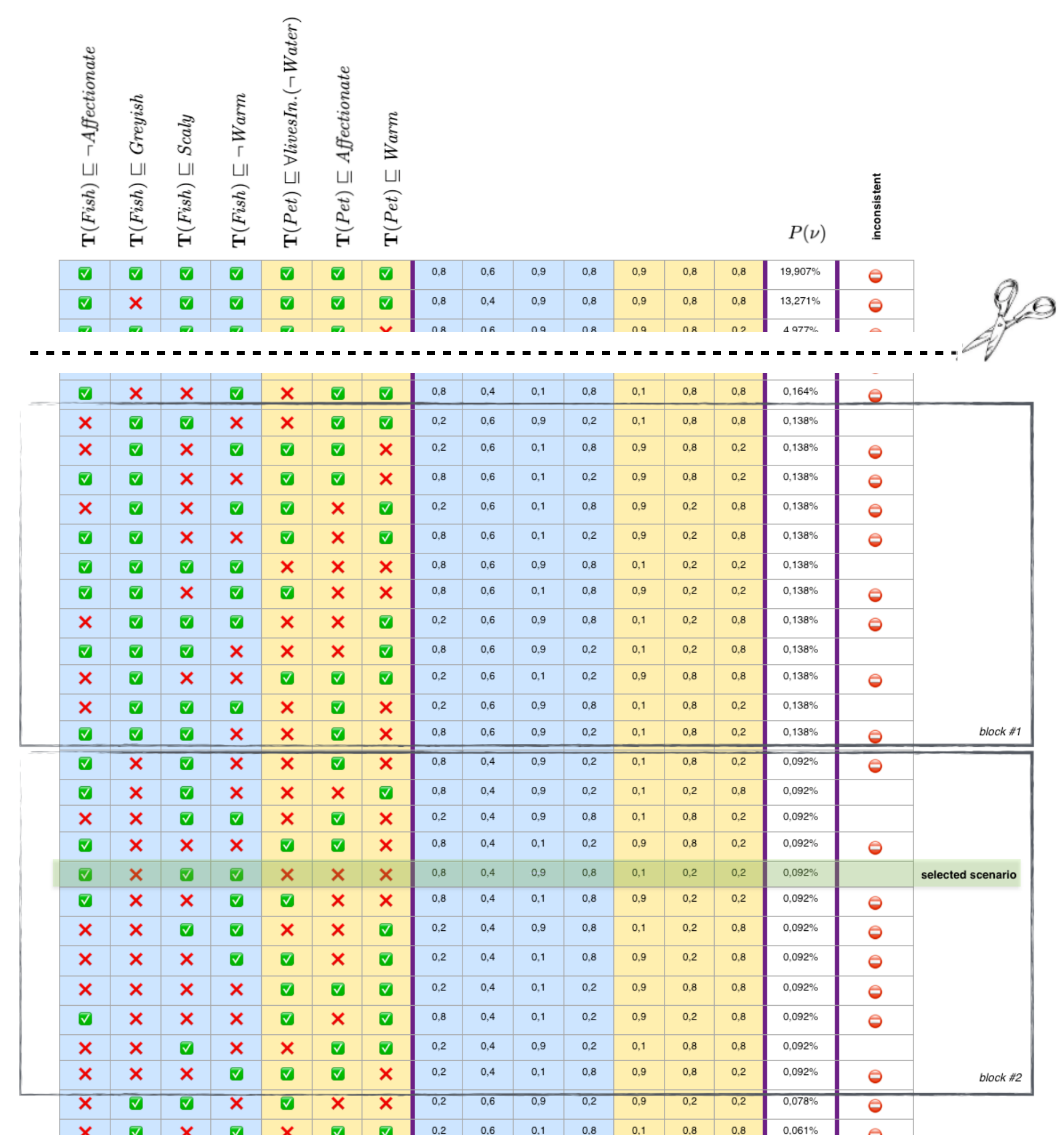}
\vspace{-0.3cm}
\caption{\label{pfffigure}Scenarios of the combination of $\mathit{Pet}$ and $\mathit{Fish}$. For the sake of readability we cut off the first part of the figure since the first blocks of the combination, i.e. 52 scenarios, are all inconsistent.}
\end{figure}


All scenarios with probabilities ranging from $19.907\%$ down to $0.164\%$ are inconsistent.
The first valid block contains scenarios whose probability is $0.138 \%$: the four consistent scenarios of this block, however, are discarded. Indeed, they either contain the inclusions $6$ but not $1$ or $7$ but not $4$, namely they give preference to the MODIFIER concerning a conflicting property of the HEAD, or they are trivial, i.e. they inherit all the properties of the HEAD (Algorithm \ref{algoritmo}, lines 32-35).

The next block contains four scenarios with probability of $0.092 \%$. The first two scenarios, again, either contain inclusions $6$ and  not  $1$ or contain  $7$ and  not  $4$, namely again it privileges the MODIFIER with respect to the corresponding negation in the HEAD. Therefore, these scenarios are discarded. The same for the last one, where both $6$ and $7$ are included rather than $1$ and $4$. 
The remaining scenario of this block includes three out of four properties of the HEAD, therefore it is not trivial and it is selected by the logic $\pfl$ for the composition of the two initial prototypes.



In conclusion, in our proposal, the not trivial scenario defining prototypical properties of a pet fish is defined from the selection $\sigma=\{(1,1), (2,0), (3,1), (4,1), (5,0), (6,0), (7,0)\}$,  and  contains inclusions 1, 3, and 4,
and the resulting scenario $w_\sigma$ is as follows:
\begin{quote}
  \begin{enumerate}
   \item[1.] $0.8  \ :: \ \tip(\mathit{Fish}) \sqsubseteq \lnot \mathit{Affectionate}$ 
   \item[3.] $0.9 \ :: \ \tip(\mathit{Fish}) \sqsubseteq \mathit{Scaly}$ 
   \item[4.] $0.8 \ :: \ \tip(\mathit{Fish}) \sqsubseteq \lnot \mathit{Warm}$ 
  \end{enumerate}
\end{quote}

\noindent The resulting $\mathit{Pet} \ \sqcap \ \mathit{Fish}$-revised knowledge base that the logic $\pfl$ suggests is as follows:

$$\kk_{\mathit{Pet} \ \sqcap \ \mathit{Fish}}=\langle \{\mathit{Fish} \sqsubseteq \forall \mathit{livesIn}.\mathit{Water}\}, \TT', \emptyset \rangle,$$ where $\TT$ is:
  \begin{itemize}
   \item $0.8  \ :: \ \tip(\mathit{Fish}) \sqsubseteq \lnot \mathit{Affectionate}$ 
    \item $0.6 \ :: \ \tip(\mathit{Fish}) \sqsubseteq \mathit{Greyish}$ 
    \item $0.9 \ :: \ \tip(\mathit{Fish}) \sqsubseteq \mathit{Scaly}$ 
    \item $0.8 \ :: \ \tip(\mathit{Fish}) \sqsubseteq \lnot \mathit{Warm}$ 
   \item $0.9 \ :: \ \tip(\mathit{Pet}) \sqsubseteq \forall \mathit{livesIn}.(\lnot \mathit{Water})$
      \item $0.8  \ :: \ \tip(\mathit{Pet}) \sqsubseteq \mathit{Affectionate}$ 
    \item $0.8 \ :: \ \tip(\mathit{Pet}) \sqsubseteq \mathit{Warm}$ 
    \item $0.8 \ :: \ \tip(\mathit{Pet} \sqcap \mathit{Fish}) \sqsubseteq \lnot \mathit{Affectionate}$ 
   \item $0.9 \ :: \ \tip(\mathit{Pet} \sqcap \mathit{Fish}) \sqsubseteq \mathit{Scaly}$
   \item $0.8 \ :: \ \tip(\mathit{Pet} \sqcap \mathit{Fish}) \sqsubseteq \lnot \mathit{Warm}$ 
\end{itemize}

\noindent Notice that in our logic $\pfl$, adding a new inclusion $\tip(\mathit{Pet} \sqcap \mathit{Fish}) \sqsubseteq \mathit{Red}$, would not be problematic.
(i.e. this means that our formalism is able to tackle the phenomenon of prototypical \emph{attributes emergence} for the new compound concept, a well established effect within the cognitive science literature [\cite{hampton1987inheritance}]).

Let us now consider the PET FISH problem by adopting a starting knowledge base whose inclusions are equipped with different probabilities. In particular: let $\kk=\langle \RR, \TT, \AAA \rangle$ be a the KB, where the ABox $\AAA$ is empty, the set of rigid inclusions is, as in the previous case,
    $$\RR=\{\mathit{Fish} \sqsubseteq \forall \mathit{livesIn}.\mathit{Water}\}$$
and the set of typicality properties $\TT$ is now as follows:
\begin{quote}
  \begin{enumerate}
   \item $0.9  \ :: \ \tip(\mathit{Fish}) \sqsubseteq \lnot \mathit{Affectionate}$ 
       \item $0.8 \ :: \ \tip(\mathit{Fish}) \sqsubseteq \mathit{Greyish}$ 
    \item $0.8 \ :: \ \tip(\mathit{Fish}) \sqsubseteq \mathit{Scaly}$ 
    \item $0.9 \ :: \ \tip(\mathit{Fish}) \sqsubseteq \lnot \mathit{Warm}$ 
   \item $0.8 \ :: \ \tip(\mathit{Pet}) \sqsubseteq \forall \mathit{livesIn}.(\lnot \mathit{Water})$
   \item $0.9  \ :: \ \tip(\mathit{Pet}) \sqsubseteq \mathit{Affectionate}$ 
 \item $0.95 \ :: \ \tip(\mathit{Pet}) \sqsubseteq \mathit{Warm}$ 
  \end{enumerate}
\end{quote}
\vspace{-0.1cm}

\noindent As in the previous case, by the properties of the typicality operator $\tip$, we inherit from the underlying $\alct$ that $\tip(\mathit{Pet} \sqcap \mathit{Fish}) \sqsubseteq \forall \mathit{livesIn}.\mathit{Water}$.

In the novel set of generated scenarios we have that the scenario with the highest probability (up to $35 \%$) is both trivial and inconsistent, therefore the following blocks of  scenarios are considered. 

\begin{figure}
\centering
\includegraphics[width=1.2\textwidth]{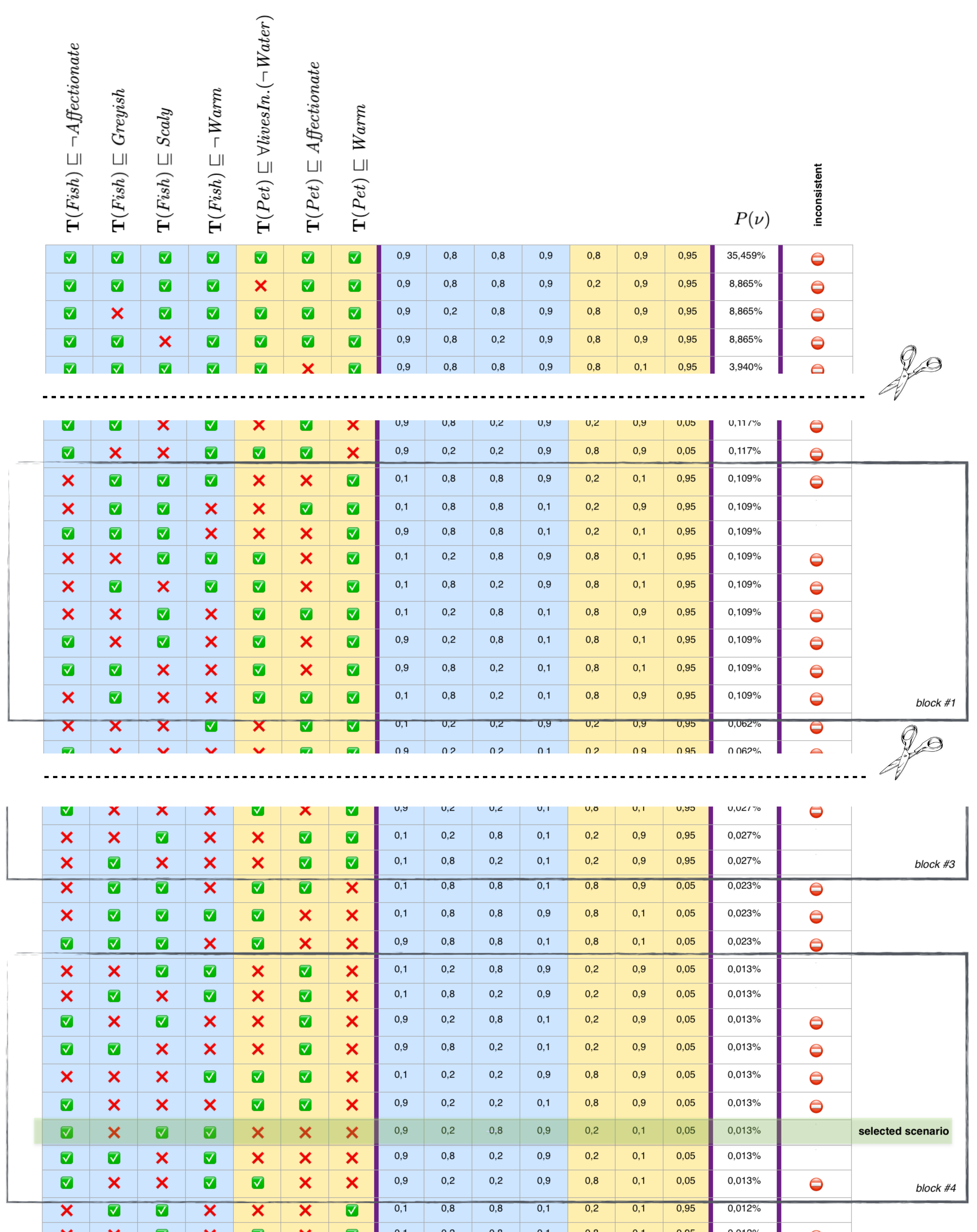}
\vspace{-0.5cm}
\caption{\label{pffigure}Scenarios of the combination of $\mathit{Pet}$ and $\mathit{Fish}$. For the sake of readability we cut off the first part of the figure since the first blocks of the combination, i.e. 43 scenarios, are all inconsistent.}
\end{figure}

Scenarios with probabilities from $35,459\%$ down to $0.117\%$ are inconsistent.
Considering blocks of consistent scenarios  in descending order of their probabilities, all scenarios with probabilities ranging from $0.109\%$ down to $0.027\%$ are 
discarded since they are either trivial, i.e. all the properties of the HEAD are inherited, or they  privilege the MODIFIER against the HEAD.

The first valid block, i.e. block \# 4 in Figure \ref{pffigure}, with probability $0.013\%$, contains four consistent scenarios. Again, two of them are discarded for HEAD/MODIFIER heuristics. Among the remaining two valid scenarios, only one of them is able to model the pet fish phenomenon: it includes three out of four properties of the HEAD, therefore it is not trivial and it is selected by the logic $\pfl$ for the composition of the two initial prototypes. 

In this case $\pfl$ remains agnostic with respect to the final selection of the scenario. For the sake of our purpose, indeed, the fact that  $\pfl$ selects the block containing the scenario with the right answer to the PET FISH problem is, even if a non optimal solution, a satisfying one. In fact, the assigned probabilities in the initial KB can be considered a more complicated testbed for this combination with respect to the previous setup. Indeed, in the latter case, there is i) no probability in the HEAD higher than those in the MODIFIER and ii) the attribute that is expected to be discarded has not a lower probability than the other properties that are assumed to be inherited by the compound concept. 

In this respect, it seems an acceptable compromise that the correct solution is included in the set of selected scenarios having the same probability\footnote{As we will see in the next sections, in application domains in which $\pfl$ is exploited, e.g. in the field of computational creativity, where it is not always easy to define which is the ``correct'' combination, this situation is not uncommon. As a consequence, in such cases the final decision about what selection represents the most appropriate combination, is left to the human decision makers.}. It is worth-noticing that also in the PET FISH formalization proposed by [\cite{lewis2016hierarchical}] it is explicitly mentioned that the result of the right composition in the PET FISH problem is reliant on the values chosen for the prototypes of PET and FISH (as this example shows). 
In particular, with respect to the proposal by [\cite{lewis2016hierarchical}], where the authors assume that the concepts of PET and FISH are exclusively represented by the properties: ``lives in house'' and  ``furry'' (with weighed values of 0.6 and 0.4 respectively) and ``lives in water'' and  ``scaly'' respectively (both with values 0.5), our logic is able to handle a more complicated running example and is able to deal with the problem of blocking - under reasonable circumstances (including some of the most challenging ones) - the inheritance of undesired properties (e.g like in the case of ``greyish'', not presented in the formalization of [\cite{lewis2016hierarchical}] but available in all the descriptions of the PET FISH problem provided by the cognitive science literature on the phenomenon). 
In the case of $\pfl$, the only situation that cannot be handled is represented by a starting KB assigning the highest degree of belief/probability value to the property of the combination that should be discarded. To the best of our knowledge, however, there is no formalization that is currently able to handle such a situation.

\subsection{Linda the feminist bank teller}\label{sezione linda}
We now exploit the logic $\pfl$ in order to tackle the conjunction fallacy problem (or ``Linda Problem''). The problem configuration is as follows: let us suppose to know that Linda is a 31 years old, single, outspoken, and bright lady. She majored in philosophy and was concerned with issues of discrimination and social justice, and also participated in anti-nuclear demonstrations. When asked to rank the probability of the statements 1) ``Linda is a bank teller'' and 2) ``Linda is a bank teller and is active
in the feminist movement'', the majority of people rank 2) as more probable than 1), violating the classic probability rules. In our logic,
let $\kk=\langle \RR, \TT, \AAA \rangle$ be a KB, where 
  $\AAA=\emptyset$,
     $\TT$ is:
     
      \begin{quote}
   $0.8  \ :: \ \tip(\mathit{Feminist}) \sqsubseteq  \mathit{Bright}$  \\
    $0.9  \ :: \ \tip(\mathit{Feminist}) \sqsubseteq  \mathit{OutSpoken}$  \\
    $0.8  \ :: \ \tip(\mathit{Feminist}) \sqsubseteq \exists \mathit{fightsFor}.\mathit{SocialJustice}$  \\
    $0.9  \ :: \ \tip(\mathit{Feminist}) \sqsubseteq \mathit{Environmentalist}$  \\
    $0.6  \ :: \ \scriptsize{\tip(\mathit{BankTeller}) \sqsubseteq \lnot \exists \mathit{fightsFor}.\mathit{SocialJustice}}$  \\
    $0.8  \ :: \ \tip(\mathit{BankTeller}) \sqsubseteq \mathit{Calm}$  
      \end{quote}

\noindent    and   $\RR$ is as follows:

  \begin{quote}
\noindent    $\mathit{BankTeller} \sqsubseteq \exists \mathit{isEmployed}.\mathit{Bank}$  \\
    $\mathit{Feminist} \sqsubseteq \exists \mathit{believesIn}.\mathit{Feminism}$ \\
    $\mathit{Feminist} \sqsubseteq \mathit{Female}$ \\
    $\scriptsize{\mathit{Environmentalist} \sqsubseteq \exists \mathit{isAgainst}.\mathit{NuclearEnergyDevelopment}}$ 
 \end{quote}

\noindent Let us consider the compound concept  $\mathit{Feminist} \sqcap \mathit{BankTeller}.$ It can be obtained in two different ways, namely by choosing $\mathit{Feminist}$ as the HEAD and $\mathit{BankTeller}$ as the MODIFIER, or vice versa. First, we consider $\mathit{Feminist}$ as the HEAD.
In $\pfl$, the compound concept inherits all the rigid properties, that is to say $\mathit{Feminist} \sqcap \mathit{BankTeller}$ is included in $\exists \mathit{isEmployed}.\mathit{Bank}$, in $\exists \mathit{believesIn}.\mathit{Feminism}$ and in $\mathit{Female}$.
Concerning the typical properties, two of them are in contrast, namely typical feminists fight for social justice, whereas typical bank tellers do not. All the scenarios including both $\tip(\mathit{Feminist}) \sqsubseteq \exists \mathit{fightsFor}.\mathit{SocialJustice}$ and $\tip(\mathit{BankTeller}) \sqsubseteq \lnot \exists \mathit{fightsFor}.\mathit{SocialJustice}$ are then inconsistent. Concerning the remaining ones, 
  scenarios including $\tip(\mathit{BankTeller}) \sqsubseteq \lnot \exists \mathit{fightsFor}.\mathit{SocialJustice}$ are discarded, in favor of scenarios including $\tip(\mathit{Feminist}) \sqsubseteq \exists \mathit{fightsFor}.\mathit{SocialJustice}$. The most obvious scenario, with the highest probability,  corresponds to the one including all the  typicality inclusions related to the HEAD. In the logic $\pfl$ we discard it and we focus on the remaining ones. Among them, one of the scenarios having the highest probability is the one not including $\tip(\mathit{Feminist}) \sqsubseteq \mathit{Bright}$. This scenario defines the following $\mathit{Feminist} \sqcap \mathit{BankTeller}$-revised knowledge base:

 \begin{quote}
  $0.9 \ :: \ \tip(\mathit{Feminist} \sqcap \mathit{BankTeller}) \sqsubseteq  \mathit{OutSpoken}$  \\
    $0.8 \ :: \ \scriptsize{\tip(\mathit{Feminist} \sqcap \mathit{BankTeller}) \sqsubseteq \exists \mathit{fightsFor}.\mathit{SocialJustice}}$  \\
    $0.9 \ :: \ \scriptsize{\tip(\mathit{Feminist} \sqcap \mathit{BankTeller}) \sqsubseteq \mathit{Environmentalist}}$ \\
    $0.8 \ :: \ \tip(\mathit{Feminist} \sqcap \mathit{BankTeller}) \sqsubseteq  \mathit{Calm}$ 
 \end{quote}

\noindent Let us now consider the case of the instance Linda, that is described as follows:

\begin{quote}
   $\mathit{YoungWoman}(\mathit{linda})$\\
   $\exists \mathit{graduatedIn}.\mathit{Philosophy}(\mathit{linda})$\\
   $\mathit{Outspoken}(\mathit{linda})$\\
   $\mathit{Bright}(\mathit{linda})$\\
   $\mathit{Single}(\mathit{linda})$\\
   $\exists \mathit{fightsFor}.\mathit{SocialJustice}(\mathit{linda})$\\
   $\exists \mathit{isAgainst}.\mathit{NuclearEnergyDevelopment}(\mathit{linda})$
 \end{quote}
In our logic, solving the conjunction fallacy problem \footnote{The attempt of modelling a reasoning error could seem \emph{prima facie}, counterintuitive in an AI setting. However, it is worth-noting that this type of reasoning corresponds to a very powerful evolutionary heuristics developed by humans and strongly relying on common-sense knowledge. The use of typical knowledge in cognitive tasks, in fact,
has to do with the constraints that concern every finite agent
that has a limited access to the knowledge relevant for a given task. Consider for example the following variant of the Linda problem. Let
us suppose that a certain individual Pluto is described as follows. He
weighs about 250 kg, and he is approximately two meters tall. His
body is covered with a thick, dark fur, he has a large mouth with robust
teeth and paws with long claws. He roars and growls. Now, given this
information, we have to evaluate the plausibility of the two following
alternatives:
a) Pippo is a mammal;
b) Pippo is a mammal, and he is wild and dangerous.
Which is the “correct” answer? According to the dictates of the normative
theory of probability, it is surely a). But if you encounter Pippo
in the wilderness, it would probably be best to run.} means that we have to find the most appropriate category for Linda. In our case the choice is between $\mathit{BankTeller}$ and $\mathit{Feminist} \sqcap \mathit{BankTeller}$. 
We can assume that, in absence of any other information, the described properties that are explicitly assigned to the instance Linda can be set to a default probability value of 0.6 (that is to say: the asserted properties about Linda are considerer ``typical enough'' for her description).
Let us first consider the $\mathit{Feminist} \sqcap \mathit{BankTeller}$-revised knowledge base, with an ABox asserting that Linda is a bank teller, that is to say $$\AAA_1=\{\mathit{BankTeller}(\mathit{linda})\},$$ and let us consider each property of the instance Linda and the associated probability of entailment. Observe that none of such properties are entailed by the $\mathit{Feminist} \sqcap \mathit{BankTeller}$-revised knowledge base with $\AAA_1$, therefore, for each property of the form $D(\mathit{linda})$ we have that $\PP(D(\mathit{linda}),0.6)=0$ (by Definition \ref{probability query}).
On the other hand, let us consider an ABox asserting that Linda is a feminist bank teller, namely $$\AAA_2=\{(\mathit{Feminist} \sqcap \mathit{BankTeller})(\mathit{linda})\}.$$ In this case, we have that:
\begin{itemize}
\item $\mathit{YoungWoman}(\mathit{linda})$  is not entailed from the $\mathit{Feminist} \ \sqcap \ \mathit{BankTeller}$-revised knowledge base, therefore \\ $\PP(\mathit{YoungWoman}(\mathit{linda}),$ $0.6)=0;$
   the same for $\exists \mathit{graduatedIn}.\mathit{Philosophy}(\mathit{linda})$ and $\mathit{Single}(\mathit{linda})$;
 \item $\mathit{Outspoken}(\mathit{linda})$ is entailed from the $\mathit{Feminist} \sqcap \mathit{BankTeller}$-revised knowledge base with $\AAA_2$, then, by Definition \ref{probability query},  we have  $\PP(\mathit{Outspoken}(\mathit{linda}),0.6)= 0.6 \times 0.9 = 0.54$, where $0.9$ is the probability of  $\tip(\mathit{Feminist} \sqcap \mathit{BankTeller}) \sqsubseteq  \mathit{OutSpoken}$ in the $\mathit{Feminist} \sqcap \mathit{BankTeller}$-revised KB;
\item the same holds for  $\exists \mathit{fightsFor}.\mathit{SocialJustice}$ $(\mathit{linda})$, which is entailed by using $\AAA_2$: in this case, we have that \\ $\PP(\exists \mathit{fightsFor}.\mathit{SocialJustice} (\mathit{linda}),$ $0.6)=0.6 \times 0.8 = 0.48$;
 \item the fact $\exists \mathit{isAgainst}.\mathit{NuclearEnergyDevelopment}(\mathit{linda})$ is entailed by using $\AAA_2$. Observe that $\mathit{Environmentalist} \sqsubseteq \mathit{NuclearEnergyDevelopment}$ follows from $\RR$ in  standard $\mathcal{ALC}$, then $\PP (\exists \mathit{isAgainst}.\mathit{NuclearEnergyDevelopment}$ $(\mathit{linda}),0.6) = 0.6 \times 0.9 = 0.54$ by Definition \ref{probability query}, where $0.9$ is the probability of $\tip(\mathit{Feminist} \sqcap \mathit{BankTeller}) \sqsubseteq \mathit{Environmentalist}$ in the $\mathit{Feminist} \sqcap \mathit{BankTeller}$-revised KB.
 \end{itemize}
Computing the sum of the probabilities of the queries of all facts about Linda,  we obtain $0.54 + 0.48  + 0.54 = 1.56$, to witness that the choice of $\AAA_2$ is more appropriate w.r.t. the choice of $\AAA_1$ where the sum is $0$. This means that, in our logic, the human choice of classifying Linda as a feminist bank teller sounds perfectly plausible
and has to be preferred to the alternative one of classifying her as a bank teller.

Let us now consider the case in which $\mathit{BankTeller}$ is the HEAD. In this case, the $\mathit{BankTeller} \sqcap \mathit{Feminist}$-revised knowledge base would be as follows:

 \begin{quote}
  $0.8 \ :: \ \tip(\mathit{BankTeller} \sqcap \mathit{Feminist}) \sqsubseteq  \mathit{Bright}$  \\
  $0.9 \ :: \ \tip(\mathit{BankTeller} \sqcap \mathit{Feminist}) \sqsubseteq  \mathit{OutSpoken}$  \\
    $0.8 \ :: \ \scriptsize{\tip(\mathit{BankTeller} \sqcap \mathit{Feminist}) \sqsubseteq \exists \mathit{fightsFor}.\mathit{SocialJustice}}$  \\
    $0.9 \ :: \ \scriptsize{\tip(\mathit{BankTeller} \sqcap \mathit{Feminist}) \sqsubseteq \mathit{Environmentalist}}$ \\
    $0.8 \ :: \ \tip(\mathit{BankTeller} \sqcap \mathit{Feminist}) \sqsubseteq  \mathit{Calm}$ 
 \end{quote}

\noindent Also in this case, the probability that Linda is a bank teller feminist is higher (2.04) than the probability of classifying her as a bank teller (as in the previous case, equals to zero).

\subsection{Metaphorical Concept Combination: a prototype of the Stone Lion}\label{sezione metafora}
In this section we consider a particular type of concept combination based on the nonmononotonic character of $\pfl$. In particular, we take into account a classical case of metaphorical concept composition considered in the field of cognitive semantics: the Stone Lion example [\cite{gardenfors2014geometry,gardenfors1998concept,franks1995sense}]. 
If we consider the concept \emph{Lion} in isolation, typically it is inferred that it is alive, it has fur and a tail, and so on. If we consider the combination of concept \emph{Stone} and \emph{Lion}, on the other hand, the only inherited aspect which is Lion-like is its shape, this means that, in this case, the effect of the combination is obtained because a stone object is metaphorically seen as a lion, due to its shape.  
Let us consider in detail  this example, and let us exploit the logic $\pfl$ in order to provide a commonsense description of a prototype of the Stone Lion. 
Let $\kk=\langle \RR, \TT, \AAA \rangle$ be a KB, where: $\AAA$ is empty, $\RR=\{\mathit{MainColorYellowish} \sqcap \mathit{MainColorGreyish} \sqsubseteq \bot \}$ are empty, and $\TT$ is as follows:

\begin{quote}
  \begin{enumerate}
   \item $0.9 \ :: \ \tip(\mathit{Stone}) \sqsubseteq \mathit{HardMaterial}$
   \item $0.8  \ :: \ \tip(\mathit{Stone}) \sqsubseteq \mathit{MainColorGreyish}$ 
      \item $0.7  \ :: \ \tip(\mathit{Stone}) \sqsubseteq \mathit{Rolling}$ 
   \item $0.8  \ :: \ \tip(\mathit{Lion}) \sqsubseteq  \mathit{MainColorYellowish}$ 
   \item $0.7 \ :: \ \tip(\mathit{Lion}) \sqsubseteq \exists \mathit{has}.\mathit{Tail}$
  \end{enumerate}
\end{quote}

\noindent  We consider $\mathit{Stone}$ as the HEAD and $\mathit{Lion}$ as the MODIFIER in defining this combination. Figure \ref{stonelion} shows the $2^5 = 32$ different scenarios, one row for each scenario, in descending order of provability. As in the example of the pet fish, inconsistent scenarios are highlighted in the last column.

\begin{figure}
\centering
\includegraphics[width=\textwidth]{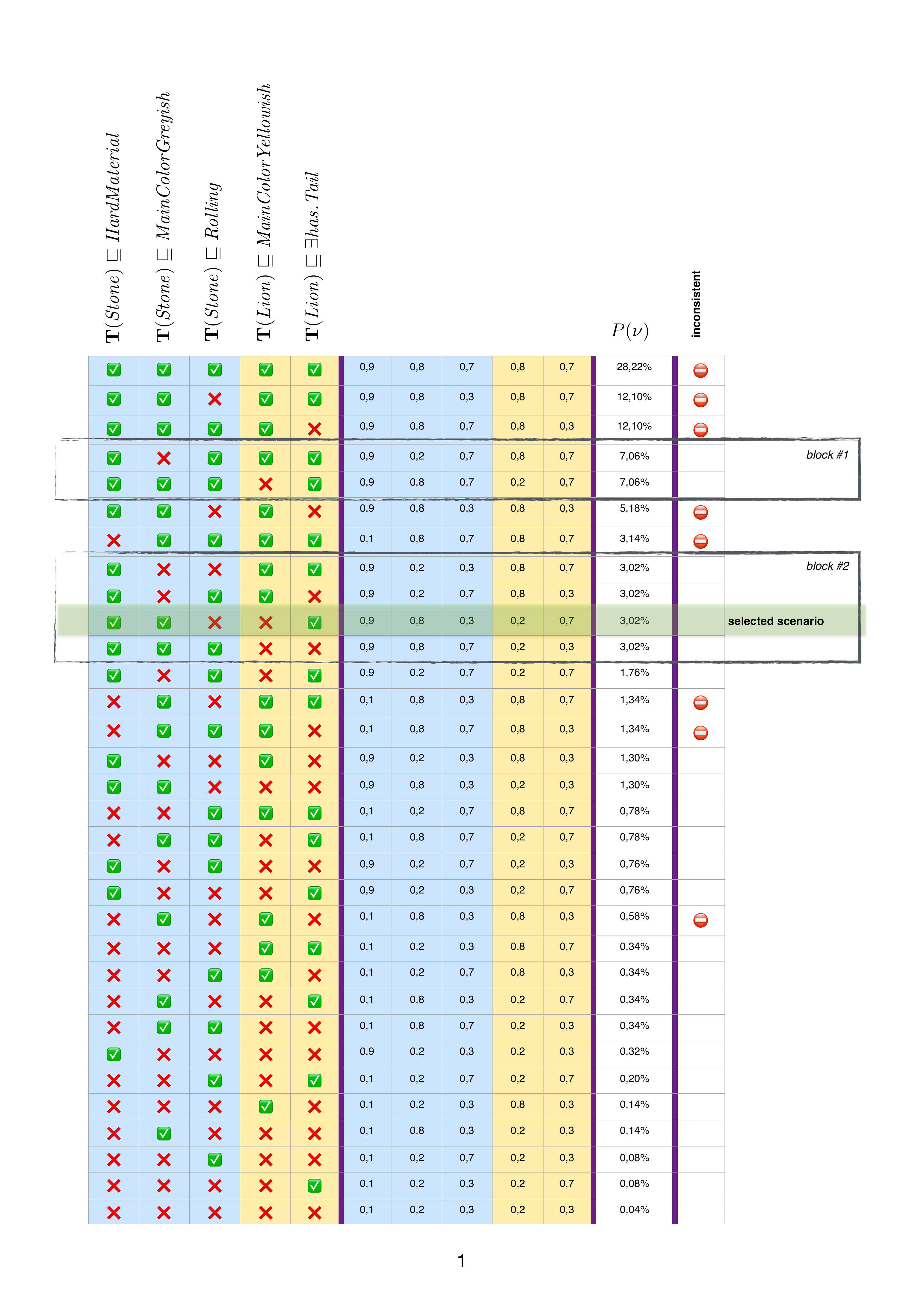}
\vspace{-0.6cm}
\caption{\label{stonelion}Scenarios of the combination of $\mathit{Stone}$ and $\mathit{Lion}$.}
\end{figure}

From a probabilistic perspective, the first three scenarios are inconsistent since both inclusions 2 and 4 are included, in other words we would have $\tip(\mathit{Stone} \sqcap \mathit{Lion}) \sqsubseteq \mathit{MainColorGreyish}$ and $\tip(\mathit{Stone} \sqcap \mathit{Lion}) \sqsubseteq \mathit{MainColorYellowish}$, having that $\mathit{MainColorGreyish} \sqcap \mathit{MainColorYellowish} \sqsubseteq \bot$. 
The first block to  consider contains two scenarios with probability $7.056\%$: the first one is discarded since it privileges a property of the MODIFIER (having $\mathit{MainColorYellowish}$) rather than a contrasting one of the HEAD (having $\mathit{MainColorGreyish}$). The second one is discarded, since it allows to inherit all the properties of the HEAD and its then considered as trivial. The following useful block is defined by four scenarios having probability $3.02\%$: 
\begin{itemize}
   \item the first and the second ones in Figure \ref{stonelion} are discarded since, again, the property $\mathit{MainColorYellowish}$ of the MODIFIER is preferred to the contrasting one of the HEAD;
   \item the last one is discarded since it contains all the properties of the HEAD, then it is trivial.
\end{itemize}

The remaining scenario, the third one of the block in Figure \ref{stonelion}, is the one selected by the logic $\pfl$. It contains inclusions 1 and 2 from the HEAD and inclusion 5 from the modifier, obtaining the following $\mathit{Stone} \sqcap \mathit{Lion}$-revised knowledge base:

  \begin{itemize}
   \item[1.] $0.9 \ :: \ \tip(\mathit{Stone} \sqcap \mathit{Lion}) \sqsubseteq \mathit{HardMaterial}$
   \item[2.] $0.8  \ :: \ \tip(\mathit{Stone} \sqcap \mathit{Lion}) \sqsubseteq \mathit{MainColorGreyish}$ 
   \item[5.] $0.7 \ :: \ \tip(\mathit{Stone} \sqcap \mathit{Lion}) \sqsubseteq \exists \mathit{has}.\mathit{Tail}$
  \end{itemize}

Notice that, if $\RR=\{\mathit{Stone} \sqsubseteq \lnot \mathit{Breath}\}$ and $\TT$ also contains $\tip(\mathit{Lion}) \sqsubseteq \mathit{Breath}$, in our logic we would also infer that $\mathit{Stone} \sqcap \mathit{Lion} \sqsubseteq \lnot \mathit{Breath}$.

\section{Artificial Prototypes Composition and Concept Invention}\label{sez:novel}

In this section we exploit the logic $\pfl$ to show both i) how it allows to automatically generate novel, plausible, prototypical concepts by composing two initial prototypes and ii) how it  can be used as a generative tool in the field of computational creativity (with applications in the so called creative industry). In detail, we extend preliminary results presented by [\cite{lietopozzato2019creativityia}] and we first show how our logic can model the generation of a quite complex concept recently introduced in the field of narratology, i.e. that one of the ANTI-HERO (a role invented by narratologists to generate new story lines), by combining the typical properties of the concepts HERO and VILLAIN. Of course, the specific domain of the example is not relevant here; our goal is showing how $\pfl$ can model this kind of prototypical concept composition (a crucial aspect of human concept invention) that, on the other hand, has been proven to be problematic for other kinds of logics (e.g fuzzy logic, [\cite{osherson1981adequacy,hampton2011conceptual})]. We then show how the same machinery can be used as a creativity support tool to generate a new type of villain for a video game or a movie.

\subsection{Anti Hero}\label{sec:antieroe}
We will take into account the concepts of HERO, ANTI-HERO and VILLAIN
extracted by the common sense descriptions coming from the TvTropes repository \footnote{\url{https://tvtropes.org}}. In such online repository, typical descriptions of character roles are provided. They  can be useful for practitioners
of the narrative field in order to design their own character according to the main assets
presented in such schemas. In particular, Tropes can be seen as devices and conventions
that a writer can reasonably rely on as being present in the audience members’ minds and
expectations. Regarding the HERO, TvTropes identifies the following relevant representative
features: e.g. the fact that it is characterized by his/her fights against the VILLAIN of a
story, the fact that his/her actions are necessarily guided by general goals to be achieved in
the interest of the collectivity, the fact that they fight against the VILLAIN in a fair way
and so on. Examples of such Trope are: Superman, Flash Gordon etc.. The ANTI-HERO,
on the other hand, is described as characterized by the fact of sharing most of its typical
traits with the HERO (e.g. the fact that it is the protagonist of a plot fighting against the
VILLAIN of the story); however, his/her moves are not guided by a general spirit of sacrifice
for the collectivity but, rather, they are usually based on some personal motivations that,
incidentally and/or indirectly, coincide with the needs of the collectivity. Furthermore the
ANTI-HERO may also act in a not fair way in order to achieve the desired goal. A classical
example of such trope is Batman, whose moves are guided by his desire of revenge. Finally
the VILLAIN is represented as a classic negative role in a plot and is characterized as the main opponent of the protagonist/HERO.
In addition to this classical contraposition, TvTropes also reports some physical elements
characterizing such role from a visual point of view. For example: the characters of this
Trope are usually physically endowed with some demoniac cues (e.g. they have the ``eyes of
fire''). Finally, they are guided by negative moral values. Examples of such role can be easily
taken from the classical literature to the modern comics. Some representative exemplars are
Cruella de Vil in Disney’s filmic saga or Voldemort in Harry Potter.

Let us now exploit our logic $\pfl$ in order to define a prototype of ANTI-HERO. First of all, we define a knowledge base describing both rigid and typical properties of concepts HERO and VILLAIN, then we rely on the logic $\pfl$ in order to formalize an $\mathit{AntiHero}$-revised knowledge base.

Let $\kk=\langle \RR, \TT, \AAA \rangle$ be a KB, where the ABox $\AAA$ is empty. Concerning rigid properties, let $\RR$ be as follows:
   \begin{enumerate}
       \item[R1] $\mathit{Hero} \sqsubseteq \exists \mathit{hasOpponent}.\mathit{Villain}$
       \item[R2] $\mathit{Villain} \sqsubseteq \forall \mathit{fightsFor}.\mathit{PersonalGoal}$
       \item[R3] $\mathit{Villain} \sqsubseteq \mathit{WithNegativeMoralValues}$
     \item[R4] $\mathit{CollectiveGoal} \sqcap \mathit{PersonalGoal} \sqsubseteq \bot$ 
      \item[R5] $\mathit{WithPositiveMoralValues} \sqcap \mathit{WithNegativeMoralValues} \sqsubseteq \bot$ 
     \item[R6] $\mathit{AngelicIconicity} \sqcap \mathit{DemoniacIconicity} \sqsubseteq \bot$ 
   \end{enumerate}
   
 Prototypical properties of villains and heroes are described in $\TT$ is as follows:
  \begin{enumerate}
   \item[T1] $0.95 \ :: \ \tip(\mathit{Hero}) \sqsubseteq \mathit{Protagonist}$
   \item[T2] $0.85  \ :: \ \tip(\mathit{Hero}) \sqsubseteq \exists \mathit{fightsFor}.\mathit{CollectiveGoal}$ 
   \item[T3] $0.9 \ :: \ \tip(\mathit{Hero}) \sqsubseteq \mathit{WithPositiveMoralValues}$
   \item[T4] $0.6 \ :: \ \tip(\mathit{Hero}) \sqsubseteq \mathit{AngelicIconicity}$
   \item[T5] $0.75 \ :: \ \tip(\mathit{Villain}) \sqsubseteq \mathit{DemoniacIconicity}$
        \item[T6] $0.8 \ :: \ \tip(\mathit{Villain}) \sqsubseteq  \mathit{Implulsive}$
      \item[T7] $0.75 \ :: \ \tip(\mathit{Villain}) \sqsubseteq  \mathit{Protagonist}$
   \end{enumerate}


We make use of the logic $\pfl$ in order to build the compound concept $\mathit{AntiHero}$ as the result of the combination of concepts $\mathit{Hero}$ and $\mathit{Villain}$. Differently from what the natural language seems to suggest, we consider this compound concept by assuming that the HEAD is $\mathit{Villain}$ (since the ANTI-HERO shares more typical traits with this concept than with the HERO concept). 

First of all, we have that the compound concepts inherits all the rigid properties of both its components (if not contradictory), therefore in the logic $\pfl$ we have that:

\begin{enumerate}
        \item[(i)] $\mathit{AntiHero} \sqsubseteq \exists \mathit{hasOpponent}.\mathit{Villain}$
        \item[(ii)] $\mathit{AntiHero} \sqsubseteq \forall \mathit{fightsFor}.\mathit{PersonalGoal}$
        \item[(iii)] $\mathit{AntiHero} \sqsubseteq  \mathit{WithNegativeMoralValues}$
\end{enumerate}

For the typical properties, we consider all the $2^7=256$ different scenarios obtained from all possible selections about inclusion in $\TT$. Some of them are inconsistent, namely those including either axiom T2 or axiom T3, since they would ascribe properties in contrast with inherited rigid properties of (ii) and (iii): rigid properties impose that an anti hero has negative moral values, and all his goals are personal, therefore he is an atypical hero in those respects (T2 states that typical heroes fights also for some collective goals, whereas T3 states that normally heroes have positive moral values). Also scenarios containing both axioms T4 and T5 are inconsistent, since the fact that the concepts $\mathit{AngelicIconicity}$ and $\mathit{DemoniacIconicity}$ are disjoint (formalized by R6).

Let us consider the remaining, consistent scenarios: the one having the highest probability considers all the properties of both concepts by excluding only $\mathit{AngelicIconicity}$, that is to say the one with the lowest probability between the two properties in conflict. In $\pfl$  this scenario is discarded since it is the most trivial one. 
When we consider scenarios less trivial, i.e., more surprising scenarios (we analyze scenarios in decreasing order of probability), we discard the scenario with probability $0.13\%$, which includes 
  T4, associated to the MODIFIER, rather than T5, associated to the HEAD, allowing to conclude, in a counter intuitive way, that typical anti heroes have an angelic iconicity rather than a demoniac one.

Next scenarios, sharing the same probability ($0.09\%$), are as follows:
\[
\begin{array}{l|cr}
\hline\\
\begin{minipage}{8cm}
 \begin{itemize}
   \item[T1] $0.95 \ :: \ \tip(\mathit{Hero}) \sqsubseteq \mathit{Protagonist}$
   \item[T5] $0.75 \ :: \ \tip(\mathit{Villain}) \sqsubseteq \mathit{DemoniacIconicity}$
        \item[T6] $0.8 \ :: \ \tip(\mathit{Villain}) \sqsubseteq  \mathit{Implulsive}$
\end{itemize}
\end{minipage}
& \ &
\begin{minipage}{7cm}
\begin{itemize}
   \item[T1] $0.95 \ :: \ \tip(\mathit{Hero}) \sqsubseteq \mathit{Protagonist}$
        \item[T6] $0.8 \ :: \ \tip(\mathit{Villain}) \sqsubseteq  \mathit{Implulsive}$
      \item[T7] $0.75 \ :: \ \tip(\mathit{Villain}) \sqsubseteq  \mathit{Protagonist}$
\end{itemize}
\end{minipage} \\ \\
\hline
\end{array}
\]

\noindent According to the logic $\pfl$, both are adequate and represent the outcome of the whole heuristic procedures adopted in $\pfl$. Probably, in this case, it could be more useful to opt for the solution on the left allowing to inherit a further property (i.e. $\mathit{DemoniacIconicity}$) for the generated prototypical Anti-Hero. However, we remain agnostic about the selection of the  final options provided by $\pfl$. This choice can be plausibly left to human decision makers and based on their own goals.

\subsection{Generating a Novel Character via $\pfl$: a Villain Chair}
Let us now exploit our logic $\pfl$ in order to create a new compound concept: e.g. a new type of villain for a video game or a movie, obtained by as the combination of concepts $\mathit{Villain}$ (as HEAD) and $\mathit{Chair}$ (as MODIFIER).
Let $\kk=\langle \RR, \TT, \AAA \rangle$ be a KB, where: $\AAA$ is empty, $\RR$ is as follows:
   \begin{enumerate}
       \item[R1] $\mathit{Villain} \sqsubseteq \exists \mathit{fightsFor}.\mathit{PersonalGoal}$ 
       \item[R2] $\mathit{Villain} \sqsubseteq \mathit{Animate}$
       \item[R3] $\mathit{Villain} \sqsubseteq \mathit{WithNegativeMoralValues}$
              \item[R4] $\mathit{Chair} \sqsubseteq  \exists \mathit{hasComponent}.\mathit{SupportingSeatComponent}$
              \item[R5] $\mathit{Chair} \sqsubseteq \exists \mathit{hasComponent}.\mathit{Seat}$
     \item[R6] $\mathit{CollectiveGoal} \sqcap \mathit{PersonalGoal} \sqsubseteq \bot$ 
   \end{enumerate}
and $\TT$ is as follows:
  \begin{enumerate}
   \item[T1] $0.9 \ :: \ \scriptsize{\tip(\mathit{Villain}) \sqsubseteq \mathit{DemoniacIconicity}}$
        \item[T2] $0.75 \ :: \ \tip(\mathit{Villain}) \sqsubseteq \exists \mathit{hasOpponent}.\mathit{Hero}$
       \item[T3] $0.75 \ :: \ \tip(\mathit{Villain}) \sqsubseteq \mathit{Protagonist}$
     \item[T4] $0.8 \ :: \ \tip(\mathit{Villain}) \sqsubseteq  \mathit{Impulsive}$
       \item[T5] $0.95 \ :: \ \tip(\mathit{Chair}) \sqsubseteq \lnot \mathit{Animate}$
             \item[T6] $0.95 \ :: \ \tip(\mathit{Chair}) \sqsubseteq \exists \mathit{hasComponent}.\mathit{Back}$
       \item[T7] $0.65 \ :: \ \tip(\mathit{Chair}) \sqsubseteq \exists \mathit{madeOf}.\mathit{Wood}$
 \item[T8] $0.8 \ :: \ \tip(\mathit{Chair}) \sqsubseteq \mathit{Comfortable}$
       \item[T9] $0.7 \ :: \ \tip(\mathit{Chair}) \sqsubseteq \mathit{Inflammable}$
   \end{enumerate}

\noindent We consider the 512 scenarios, from which we discard the inconsistent ones, namely those including T5: indeed, since R2 imposes that villains are animate, in the underlying $\alct$ we conclude that $\mathit{Villain} \sqcap \mathit{Chair} \sqsubseteq \mathit{Animate}$, therefore all scenarios including T5, imposing that $\mathit{Villain} \sqcap \mathit{Chair} \sqsubseteq \lnot \mathit{Animate}$ are inconsistent.
We also discard the most obvious scenario including all the typicality inclusions of $\RR$, having probability of $14\%$. We also discard 
the following trivial scenarios containing all the inclusions related to the HEAD, namely T1, T2, T3, and T4.

The first suitable block according to Algorithm \ref{algoritmo} is the one whose scenarios have probability $4.67\%$ and contain all properties coming from the MODIFIER and three out of four properties coming from the HEAD. Such scenarios, defining two alternative revised knowledge bases (one containing T2 and not T3, the other one containing T3 and not T2), are as follows:

\begin{footnotesize}
\[
\begin{array}{l|cr}
\hline\\
\begin{minipage}{7cm}
 \begin{itemize}
   \item[T1] $0.9 \ :: \ \scriptsize{\tip(\mathit{Villain}) \sqsubseteq \mathit{DemoniacIconicity}}$
       \item[T3] $0.75 \ :: \ \tip(\mathit{Villain}) \sqsubseteq \mathit{Protagonist}$
     \item[T4] $0.8 \ :: \ \tip(\mathit{Villain}) \sqsubseteq  \mathit{Impulsive}$
             \item[T6] $0.95 \ :: \ \tip(\mathit{Chair}) \sqsubseteq \exists \mathit{hasComponent}.\mathit{Back}$
       \item[T7] $0.65 \ :: \ \tip(\mathit{Chair}) \sqsubseteq \exists \mathit{madeOf}.\mathit{Wood}$
 \item[T8] $0.8 \ :: \ \tip(\mathit{Chair}) \sqsubseteq \mathit{Comfortable}$
       \item[T9] $0.7 \ :: \ \tip(\mathit{Chair}) \sqsubseteq \mathit{Inflammable}$

\end{itemize}
\end{minipage}
& \ &
\begin{minipage}{7cm}
\begin{itemize}
   \item[T1] $0.9 \ :: \ \scriptsize{\tip(\mathit{Villain}) \sqsubseteq \mathit{DemoniacIconicity}}$
        \item[T2] $0.75 \ :: \ \tip(\mathit{Villain}) \sqsubseteq \exists \mathit{hasOpponent}.\mathit{Hero}$
     \item[T4] $0.8 \ :: \ \tip(\mathit{Villain}) \sqsubseteq  \mathit{Impulsive}$
             \item[T6] $0.95 \ :: \ \tip(\mathit{Chair}) \sqsubseteq \exists\mathit{hasComponent}.\mathit{Back}$
       \item[T7] $0.65 \ :: \ \tip(\mathit{Chair}) \sqsubseteq \exists\mathit{madeOf}.\mathit{Wood}$
 \item[T8] $0.8 \ :: \ \tip(\mathit{Chair}) \sqsubseteq \mathit{Comfortable}$
       \item[T9] $0.7 \ :: \ \tip(\mathit{Chair}) \sqsubseteq \mathit{Inflammable}$
\end{itemize}
\end{minipage} \\ \\
\hline
\end{array}
\]
\end{footnotesize}

\noindent According to Algorithm \ref{algoritmo}, both proposals  neither are trivial (not all properties of the HEAD are ascribed to the combined concept) nor they give preference to the MODIFIER with respect to the HEAD for conflicting typical properties (in the example, we have only a conflict between a rigid and typical property, the above mentioned R2 and T5).
 These scenarios are the preferred ones selected by the logic $\pfl$.

However, in this application setting, we could imagine to use our framework as a creativity support tool and thus considering alternative - more surprising - scenarios by adding additional constraints.
For example, we could impose that the compound concept should inherit exactly six properties. In this case, we would get that the scenario having the highest probability ($3.2 \%$) is the one including 
all the properties of the HEAD, namely T1, T2, T3 and T4, and two out of four properties of the MODIFIER, namely T6 and T8. Due to its triviality, this scenario is discarded, in favor of the following block of two scenarios (probability $2.51\%$), obtained by excluding T7 of the MODIFIER and one out of four properties of the HEAD: 


\begin{footnotesize}
\[
\begin{array}{l|cr}
\hline\\
\begin{minipage}{8cm}
 \begin{itemize}
   \item[T1] $0.9 \ :: \ \scriptsize{\tip(\mathit{Villain} \sqcap \mathit{Chair}) \sqsubseteq \mathit{DemoniacIconicity}}$
       \item[T3] $0.75 \ :: \ \tip(\mathit{Villain} \sqcap \mathit{Chair}) \sqsubseteq \mathit{Protagonist}$
     \item[T4] $0.8 \ :: \ \tip(\mathit{Villain} \sqcap \mathit{Chair}) \sqsubseteq  \mathit{Impulsive}$
             \item[T6] $0.95 \ :: \ \tip(\mathit{Villain} \sqcap \mathit{Chair}) \sqsubseteq \exists \mathit{hasComponent}.\mathit{Back}$
 \item[T8] $0.8 \ :: \ \tip(\mathit{Villain} \sqcap \mathit{Chair}) \sqsubseteq \mathit{Comfortable}$
       \item[T9] $0.7 \ :: \ \tip(\mathit{Villain} \sqcap \mathit{Chair}) \sqsubseteq \mathit{Inflammable}$

\end{itemize}
\end{minipage}
& \ &
\begin{minipage}{8cm}
\begin{itemize}
   \item[T1] $0.9 \ :: \ \scriptsize{\tip(\mathit{Villain} \sqcap \mathit{Chair}) \sqsubseteq \mathit{DemoniacIconicity}}$
        \item[T2] $0.75 \ :: \ \tip(\mathit{Villain} \sqcap \mathit{Chair}) \sqsubseteq \exists \mathit{hasOpponent}.\mathit{Hero}$
     \item[T4] $0.8 \ :: \ \tip(\mathit{Villain} \sqcap \mathit{Chair}) \sqsubseteq  \mathit{Impulsive}$
             \item[T6] $0.95 \ :: \ \tip(\mathit{Villain} \sqcap \mathit{Chair}) \sqsubseteq \exists \mathit{hasComponent}.\mathit{Back}$
  \item[T8] $0.8 \ :: \ \tip(\mathit{Villain} \sqcap \mathit{Chair}) \sqsubseteq \mathit{Comfortable}$
       \item[T9] $0.7 \ :: \ \tip(\mathit{Villain} \sqcap \mathit{Chair}) \sqsubseteq \mathit{Inflammable}$
\end{itemize}
\end{minipage} \\ \\
\hline
\end{array}
\]
\end{footnotesize}

These are the scenarios selected by $\pfl$ and are those providing plausible but not obvious, creative definitions of a villain chair.

\subsection{Generating Compounds by Combining Multiple Concepts: a prototype for the Chimera}\label{sec:chimera}
In this section we show how the mechanism of the logic $\pfl$ can be used in order to combine more than two atomic concepts (a case never taken into account in previous formalizations, e.g. in [\cite{lewis2016hierarchical,eppe2018computational}] etc.). Such examples show how the mechanisms behind $\pfl$ can be iteratively applied without loss of efficacy in the produced output. In our opinion, this is a symptom of the fact that our framework is able to actually capture some foundational elements of common-sense conceptual compositionality.

Let us consider the example of a chimera, a mythological hybrid entity composed of the parts of more than one animal. It is usually depicted as a lion, with the head of a goat positioned in the center of its body, and a tail ended with the head of a fire-breathing dragon \footnote{Technically this kind of combination is called conceptual blending and is slightly different from a classical combination since the generated concept is an entirely new one and is not a subset of classes generating it, see [\cite{nagai2006formaldescription}] for more details on the subtle differences between the two tasks.}.

First of all, let us describe the three atomic concepts to be combined, namely $\mathit{Lion}$, $\mathit{Goat}$, and $\mathit{Dragon}$.
Let $\kk$ be as follows:

\begin{enumerate}
   \item $\mathit{Lion} \sqsubseteq  \mathit{Animal}$
   \item $0.7  \ :: \ \tip(\mathit{Lion}) \sqsubseteq  \mathit{MainColorYellowish}$ 
   \item $0.9 \ :: \ \tip(\mathit{Lion}) \sqsubseteq \exists \mathit{has}.\mathit{Tail}$ 
   \item $0.8 \ :: \ \tip(\mathit{Lion}) \sqsubseteq \exists \mathit{has}.\mathit{Mane}$ 
   \item $\mathit{Goat} \sqsubseteq  \mathit{Animal}$
   \item $0.7  \ :: \ \tip(\mathit{Goat}) \sqsubseteq \mathit{MainColorWhitish}$ 
   \item $0.75 \ :: \ \tip(\mathit{Goat}) \sqsubseteq \exists \mathit{has}.\mathit{Tail}$ 
   \item $0.9 \ :: \ \tip(\mathit{Goat}) \sqsubseteq \exists \mathit{has}.\mathit{Horns}$ 
   \item $0.9 \ :: \ \tip(\mathit{Goat}) \sqsubseteq \exists \mathit{has}.\mathit{Beards}$ 
   \item $\mathit{Dragon} \sqsubseteq  \mathit{LegendaryCreature}$
   \item $0.9  \ :: \ \tip(\mathit{Dragon}) \sqsubseteq  \mathit{FireBreathing}$ 
   \item $0.8 \ :: \ \tip(\mathit{Dragon}) \sqsubseteq  \mathit{Fly}$    
   \item $0.9 \ :: \ \tip(\mathit{Dragon}) \sqsubseteq  \mathit{Aggressive}$   
   \item $\mathit{Whitish} \sqcap \mathit{Yellowish} \sqsubseteq \bot$
\end{enumerate}

We consider that $\mathit{Lion}$ as the HEAD, whereas both $\mathit{Goat}$ and $\mathit{Dragon}$ are modifiers.

We discard all the inconsistent scenarios including both 2 and 6. We also discard  the most trivial scenario including all the typicality inclusions of the HEAD, whose probability is $5.95\%$ and, in the same block of scenarios, we also discard the one giving preference to the MODIFIER $\mathit{Goat}$ with respect to the HEAD. More precisely the scenario preferring the inclusion 6 to 2. Among the remaining ones, the selected scenario having the highest probability ($2.55\%$) is as follows, not including both 2 and 6 ($Chimera$-revised knowledge base):

\begin{itemize}
   \item $\mathit{Chimera} \sqsubseteq  \mathit{Animal}$
   \item $0.9 \ :: \ \tip(\mathit{Chimera}) \sqsubseteq \exists \mathit{has}.\mathit{Tail}$ 
   \item $0.8 \ :: \ \tip(\mathit{Chimera}) \sqsubseteq \exists \mathit{has}.\mathit{Mane}$ 
   \item $0.9 \ :: \ \tip(\mathit{Chimera}) \sqsubseteq \exists \mathit{has}.\mathit{Horns}$ 
   \item $0.9 \ :: \ \tip(\mathit{Chimera}) \sqsubseteq \exists \mathit{has}.\mathit{Beards}$ 
   \item $\mathit{Chimera} \sqsubseteq  \mathit{LegendaryCreature}$
   \item $0.9  \ :: \ \tip(\mathit{Chimera}) \sqsubseteq  \mathit{FireBreathing}$ 
   \item $0.8 \ :: \ \tip(\mathit{Chimera}) \sqsubseteq  \mathit{Fly}$    
   \item $0.9 \ :: \ \tip(\mathit{Chimera}) \sqsubseteq  \mathit{Aggressive}$ 
   \item $\mathit{White} \sqcap \mathit{Yellow} \sqsubseteq \bot$
\end{itemize}

\section{Iterated Generation of Concepts: Combining Concepts from $C$-revised Knowledge Bases}\label{sez:iterativa}
A $C$-\emph{revised} knowledge base in the logic $\pfl$ is still in the language of the $\pfl$ logic. This allows us to iteratively repeat the same procedure in order to combine not only atomic concepts, but also compound concepts. 
In this section we show that our approach  can handle also the concept combination of $C$-revised knowledge bases.

Let us consider the compound concept obtained as the combination of an anti-hero -- as described by the $\mathit{AntiHero}$-knowledge base of Section \ref{sec:antieroe}\footnote{In this example we consider the first $\mathit{AntiHero}$-revised knowledge base obtained in Section \ref{sec:antieroe}.} -- and a chimera -- $\mathit{Chimera}$-revised knowledge base in Section \ref{sec:chimera}. 

The starting TBox $\TT$ is as follows:
\begin{enumerate}
   \item $\mathit{Chimera} \sqsubseteq  \mathit{Animal}$
   \item $0.9 \ :: \ \tip(\mathit{Chimera}) \sqsubseteq \exists \mathit{has}.\mathit{Tail}$ 
   \item $0.8 \ :: \ \tip(\mathit{Chimera}) \sqsubseteq \exists \mathit{has}.\mathit{Mane}$ 
   \item $0.9 \ :: \ \tip(\mathit{Chimera}) \sqsubseteq \exists \mathit{has}.\mathit{Horns}$ 
   \item $0.9 \ :: \ \tip(\mathit{Chimera}) \sqsubseteq \exists \mathit{has}.\mathit{Beards}$ 
   \item $\mathit{Chimera} \sqsubseteq  \mathit{LegendaryCreature}$
   \item $0.9  \ :: \ \tip(\mathit{Chimera}) \sqsubseteq  \mathit{FireBreathing}$ 
   \item $0.8 \ :: \ \tip(\mathit{Chimera}) \sqsubseteq  \mathit{Fly}$    
   \item $0.9 \ :: \ \tip(\mathit{Chimera}) \sqsubseteq  \mathit{Aggressive}$ 
        \item $\mathit{AntiHero} \sqsubseteq \exists \mathit{hasOpponent}.\mathit{Villain}$
        \item $\mathit{AntiHero} \sqsubseteq \exists \mathit{fightsFor}.\mathit{PersonalGoal}$
        \item $\mathit{AntiHero} \sqsubseteq \mathit{WithNegativeMoralValues}$
        \item 0.95 \ :: \ $\tip(\mathit{AntiHero}) \sqsubseteq \mathit{Protagonist}$
        \item 0.75 \ :: \ $\tip(\mathit{AntiHero}) \sqsubseteq \mathit{DemoniacIconicity}$
        \item 0.8 \ :: \ $\tip(\mathit{AntiHero}) \sqsubseteq  \mathit{Impulsive}$
\end{enumerate}

\noindent We discard the most obvious scenario having a probability of $20.4\%$ and considering all the typicality inclusions. Also the immediately lower scenario with probability $7.18\%$ is discarded since it inherits all the properties of the HEAD (and, therefore, is trivial).   As for the case of the villain chair of Section  \ref{sez:novel}, we firstly consider the following HEAD-MODIFIER combination: chimera is the HEAD, whereas anti-hero is the MODIFIER. Given this, the scenarios having the highest probability ($5.3\%$), giving preference to the HEAD, are the ones discarding either 
3 (has mane) or 8 (fly), and including all the other typicality assumptions.
These scenarios are those preferred from a cognitive point of view.

 \vspace{0.8cm}
 \hrule
 
\begin{itemize}
   \item $\mathit{Chimera} \sqsubseteq  \mathit{Animal}$
   \item $0.9 \ :: \ \tip(\mathit{AntiHero \sqcap Chimera}) \sqsubseteq \exists \mathit{has}.\mathit{Tail}$ 
   \item $0.9 \ :: \ \tip(\mathit{AntiHero \sqcap Chimera}) \sqsubseteq \exists \mathit{has}.\mathit{Horns}$ 
   \item $\mathit{Chimera} \sqsubseteq  \mathit{LegendaryCreature}$
   \item $0.9 \ :: \ \tip(\mathit{AntiHero \sqcap Chimera}) \sqsubseteq  \mathit{Aggressive}$ 
        \item $\mathit{AntiHero} \sqsubseteq \exists \mathit{hasOpponent}.\mathit{Villain}$
        \item $\mathit{AntiHero} \sqsubseteq \exists \mathit{fightsFor}.\mathit{PersonalGoal}$
        \item $\mathit{AntiHero} \sqsubseteq \mathit{WithNegativeMoralValues}$
        \item 0.75 \ :: \ $\tip(\mathit{AntiHero \sqcap Chimera}) \sqsubseteq \mathit{Protagonist}$
        \item 0.9 \ :: \ $\tip(\mathit{AntiHero \sqcap Chimera}) \sqsubseteq \mathit{DemoniacIconicity}$
        \item 0.8 \ :: \ $\tip(\mathit{AntiHero \sqcap Chimera}) \sqsubseteq  \mathit{Impulsive}$
\end{itemize}

\hrule 

\begin{itemize}
   \item $\mathit{Chimera} \sqsubseteq  \mathit{Animal}$
   \item $0.9 \ :: \ \tip(\mathit{AntiHero \sqcap Chimera}) \sqsubseteq \exists \mathit{has}.\mathit{Beards}$ 
   \item $\mathit{Chimera} \sqsubseteq  \mathit{LegendaryCreature}$
   \item $0.9  \ :: \ \tip(\mathit{AntiHero \sqcap Chimera}) \sqsubseteq  \mathit{FireBreathing}$ 
   \item $0.9 \ :: \ \tip(\mathit{AntiHero \sqcap Chimera}) \sqsubseteq  \mathit{Aggressive}$ 
        \item $\mathit{AntiHero} \sqsubseteq \exists \mathit{hasOpponent}.\mathit{Villain}$
        \item $\mathit{AntiHero} \sqsubseteq \exists \mathit{fightsFor}.\mathit{PersonalGoal}$
        \item $\mathit{AntiHero \sqcap Chimera} \sqsubseteq \mathit{WithNegativeMoralValues}$
        \item 0.75 \ :: \ $\tip(\mathit{AntiHero \sqcap Chimera}) \sqsubseteq \mathit{Protagonist}$
        \item 0.9 \ :: \ $\tip(\mathit{AntiHero \sqcap Chimera}) \sqsubseteq \mathit{DemoniacIconicity}$
        \item 0.8 \ :: \ $\tip(\mathit{AntiHero \sqcap Chimera}) \sqsubseteq  \mathit{Impulsive}$
\end{itemize}

\hrule

\vspace{0.8cm}

 We could also consider more surprising scenarios by tuning both the number of properties inherited by the compound concepts and by reverting the assignment of the HEAD-MODIFIER pair among the concepts.
 In the first case, let us restrict our concern to scenarios allowing the chimera anti-hero to inherit 6 properties\footnote{Of course, the number of properties can be considered as a parameter through which it is possible to play with the mechanisms underlying the logic $\pfl$.} from the original concepts. In this case, the scenario selected according to $\pfl$ has probability $0.11\%$ and includes 5 out of 7 inclusions of the HEAD, namely 2, 4, 5, 7, 9, and only one of the MODIFIER, namely 13. In this case, the $\mathit{AntiHero}-\mathit{Chimera}$-revised knowledge base is as follows:
 
\begin{itemize}
   \item $\mathit{Chimera} \sqsubseteq  \mathit{Animal}$
   \item $0.9 \ :: \ \tip(\mathit{AntiHero} \sqcap \mathit{Chimera}) \sqsubseteq \exists \mathit{has}.\mathit{Tail}$ 
   \item $0.9 \ :: \ \tip(\mathit{AntiHero} \sqcap \mathit{Chimera}) \sqsubseteq \exists \mathit{has}.\mathit{Horns}$ 
   \item $0.9 \ :: \ \tip(\mathit{AntiHero} \sqcap \mathit{Chimera}) \sqsubseteq \exists \mathit{has}.\mathit{Beards}$ 
   \item $\mathit{Chimera} \sqsubseteq  \mathit{LegendaryCreature}$
   \item $0.9  \ :: \ \tip(\mathit{AntiHero} \sqcap \mathit{Chimera}) \sqsubseteq  \mathit{FireBreathing}$ 
   \item $0.9 \ :: \ \tip(\mathit{AntiHero} \sqcap \mathit{Chimera}) \sqsubseteq  \mathit{Aggressive}$ 
        \item $\mathit{AntiHero} \sqsubseteq \exists \mathit{hasOpponent}.\mathit{Villain}$
        \item $\mathit{AntiHero} \sqsubseteq \exists \mathit{fightsFor}.\mathit{PersonalGoal}$
        \item $\mathit{AntiHero} \sqsubseteq \mathit{WithNegativeMoralValues}$
        \item 0.95 \ :: \ $\tip(\mathit{AntiHero} \sqcap \mathit{Chimera}) \sqsubseteq \mathit{Protagonist}$
 \end{itemize} 
 
 Notice that, as in all the previous examples, the combined concept implicitly  inherits all the rigid properties of both HEAD and MODIFIER: here, for instance, we derive that $\mathit{AntiHero} \sqcap \mathit{Chimera} \sqsubseteq  \mathit{Animal}$, as well as $\mathit{AntiHero} \sqcap \mathit{Chimera} \sqsubseteq \exists \mathit{hasOpponent}.\mathit{Villain}$, by the rigid properties $\mathit{Chimera} \sqsubseteq  \mathit{Animal}$ and $\mathit{AntiHero} \sqsubseteq \exists \mathit{hasOpponent}.\mathit{Villain}$, respectively.

Let us conclude this example by swapping the HEAD and the MODIFIER, namely we exploit the logic $\pfl$ in order to define a anti-hero chimera by considering the concept $\mathit{AntiHero}$ as the HEAD and the concept $\mathit{Chimera}$ as the MODIFIER. This is just based on the assumption that the role played by the novel character can be more relevant from a narrative point of view. 
Again, we discard the most obvious scenario having a probability of $20.4\%$ and considering all the typicality inclusions. However, the scenario with immediately lower probability ($7.18\%$) is selected, since it inherits only 2 out of 3 properties from the HEAD, namely those corresponding to inclusions 13 and 14. The resulting $\mathit{AntiHero-Chimera}$-revised knowledge base is as follows: 

\begin{itemize}
   \item $\mathit{Chimera} \sqsubseteq  \mathit{Animal}$
   \item $0.9 \ :: \ \tip(\mathit{AntiHero \sqcap Chimera}) \sqsubseteq \exists \mathit{has}.\mathit{Tail}$ 
   \item $0.8 \ :: \ \tip(\mathit{AntiHero \sqcap Chimera}) \sqsubseteq \exists \mathit{has}.\mathit{Mane}$ 
   \item $0.9 \ :: \ \tip(\mathit{AntiHero \sqcap Chimera}) \sqsubseteq \exists \mathit{has}.\mathit{Horns}$ 
   \item $0.9 \ :: \ \tip(\mathit{AntiHero \sqcap Chimera}) \sqsubseteq \exists \mathit{has}.\mathit{Beards}$ 
   \item $\mathit{Chimera} \sqsubseteq  \mathit{LegendaryCreature}$
   \item $0.9  \ :: \ \tip(\mathit{AntiHero \sqcap Chimera}) \sqsubseteq  \mathit{FireBreathing}$ 
   \item $0.8 \ :: \ \tip(\mathit{AntiHero \sqcap Chimera}) \sqsubseteq  \mathit{Fly}$    
   \item $0.9 \ :: \ \tip(\mathit{AntiHero \sqcap Chimera}) \sqsubseteq  \mathit{Aggressive}$ 
        \item $\mathit{AntiHero} \sqsubseteq \exists \mathit{hasOpponent}.\mathit{Villain}$
        \item $\mathit{AntiHero} \sqsubseteq \exists \mathit{fightsFor}.\mathit{PersonalGoal}$
        \item $\mathit{AntiHero} \sqsubseteq \mathit{WithNegativeMoralValues}$
        \item 0.95 \ :: \ $\tip(\mathit{AntiHero \sqcap Chimera}) \sqsubseteq \mathit{Protagonist}$
        \item 0.8 \ :: \ $\tip(\mathit{AntiHero \sqcap Chimera}) \sqsubseteq  \mathit{Impulsive}$
\end{itemize}



\section{Conclusions, Related Works and Future Research}\label{sez:conclusioni}
We have introduced a nonmonotonic Description Logic $\pfl$ for concept combination, extending the DL of typicality $\alct$ with a DISPONTE semantics and with a blocking inheritance selection heuristics coming from the cognitive semantics. This logic enjoys good computational properties, since entailment in it remains \texttt{ExpTime} as the underlying monotonic $\alc$, and is able to take into account the concept combination of prototypical properties.
To this aim, the logic $\pfl$ allows to have inclusions of the form $p \ :: \ \tip(C) \sqsubseteq D$, representing that, with a probability $p$, typical $C$s are also $D$s. Then, several different scenarios -- having different probabilities -- are described by including or not such inclusions, and prototypical properties of combinations of concepts are obtained by restricting reasoning services to scenarios having suitable probabilities, excluding ``trivial'' ones with the highest probabilities.

Several approaches in extending DLs with nonmonotonic capabilities  have been proposed in the
literature. All these approaches are essentially based on the integration of DLs  with  well established nonmonotonic reasoning mechanisms    [\cite{bonattilutz2,baader95b,donini2002,AIJ,Casinistraccia2010,CasiniStracciaJAIR,bonattisauropetrova}], ranging from Reiter's defaults to minimal knowledge and negation as failure.  
In [\cite{lukas}]  two probabilistic extensions of  Description Logics $\mathcal{SHIF}$({\bf D}) and $\mathcal{SHOIN}$({\bf D}) are introduced. These extensions are semantically based on the notion of probabilistic lexicographic entailment [\cite{lex}] and allow to represent and reason about prototypical properties of classes that are semantically interpreted as lexicographic entailment introduced by Lehmann from conditional knowledge bases. Intuitively, the basic idea is to interpret inclusions of the TBox and facts in the ABox as probabilistic knowledge about random and concrete instances of concepts. As an example, in these extensions one can express that ``typically, a randomly chosen student makes use of social networks with a probability of at least $70\%$'' with a formula of the form $(\mathit{SocialNetworkUser} \ \mid \ \mathit{Student})[0.7, \ 1]$. As the logic of typicality $\alct$ underlying our work, the lexicographic entailment defined in [\cite{lukas}] inherits interesting and useful nonmonotonic properties from lexicographic entailment in [\cite{lex}], such as specificity, rational monotonicity and some forms of irrelevance. As a difference with our approach, here probabilities are used in order to capture an alternative notion of typicality of concepts, rather than to express degrees of belief of typicality inclusions that are at the base of our notion of our machinery for concept combination. All these approaches could be seen as possible alternative with respect to the logic of typicality $\alct$, representing one of the building blocks of the logic $\pfl$. However, we do not investigate here to what extent these approaches can actually be compliant with respect to the problem of handling typicality based concept combinations. To the best of our knowledge, there are not existing works exploiting these logics to handle the problem in focus and we believe this aspect is worth-considering for future works. 

Another related work with respect to the proposed formalism is in [\cite{ecsqaru2017}]. Here   
the author extends the logic of typicality $\alct$ by means of probabilities equipping typicality inclusions of the form $\tip(C) \sqsubseteq_p D$, whose intuitive meaning is that, ``normally, $C$s are $D$s and we have a probability of $1-p$ of having exceptional $C$s not being $D$s''. Probabilities of exceptions are then used in order to reason about plausible scenarios, obtained by selecting only {\em some}  typicality assumptions and whose probabilities belong to a given and fixed range. As a difference with the logic $\pfl$, all typicality assumptions are systematically taken into account: as a consequence, one cannot exploit such a DL for capturing compositionality, since it is not possible to block  inheritance of  prototypical properties in concept combination. The logic $\pfl$ extends the work of [\cite{ecsqaru2017}] in that it does not  systematically take into account all typicality assumptions. As a consequence, $\pfl$ allows to block inheritance of  prototypical properties in concept combination.
The same criticism applies also to the approach proposed in [\cite{IA2017}], where $\alct$ is extended by inclusions of the form $\tip(C) \sqsubseteq_d D$, where $d$ is a \emph{degree of expectedness}, used to define a preference relation among  extended ABoxes: entailment of queries is then restricted to ABoxes that are minimal with respect to such preference relations and that represent surprising scenarios. Also in this case, however, the resulting logic does not allow to define scenarios containing only some inclusions, since all of them are systematically considered.
Similarly, probabilistic DLs [\cite{riguzzi}] themselves cannot be employed as a framework for dealing with the combination of prototypical concepts, since these logics are not equipped with reasoning mechanisms needed for typicality based reasoning (and that, in $\pfl$, are inherited by $\alct$).

Several approaches have been recently introduced  in order to tackle the problem of reasoning under probabilistic uncertainty in Description Logics. 
In [\cite{lukasstraccia}] the authors combine fuzzy Description Logics, fuzzy logic programs and probabilistic uncertainty. An extension of the lightweight Description Logic $\mathit{DL-Lite}$ within a possibility theory setting is proposed in [\cite{benf}]. In this approach, uncertainty degrees are associated to inclusions in order to define the inconsistency degree of a KB.  All these approaches neglect to consider the proposed frameworks as the basis for the combination of concepts since,
as already mentioned, approaches based on fuzziness fail to do the job.

For what concerns, more specifically, the modelling of  prototypical concept composition in a human-like fashion (and with human-level performances), several approaches have been proposed in both the AI and computational cognitive science communities. Lewis and Lawry [\cite{lewis2016hierarchical}] present a detailed analysis of the limits of the set-theoretic approaches [\cite{montague1973proper}], the fuzzy logics [\cite{zadeh1975concept,dubois1997three}] (whose limitations was already shown in [\cite{osherson1981adequacy,smith1984conceptual,hampton2011conceptual}]), the vector-space models [\cite{mitchell2010composition}] and quantum probability approaches [\cite{aerts2013concepts}] proposed to model this phenomenon. In addition, they propose to use hierarchical conceptual spaces [\cite{gardenfors2014geometry}] to model the phenomenon in a way that accurately reflects how humans exploit their creativity in conjunctive concept combination. While we agree with the authors with the comments moved to the described approaches, in this work we have shown that our logic can equally model, in a cognitively compliant-way, the composition of prototypes by using a computationally effective nonmonotonic formalism. In particular, our model is able to meet all following cognitive requirements [\cite{lewis2016hierarchical,hampton2011conceptual}]: i) it provides a blocking mechanism of property inheritance for prototypical concept combination thus enabling the possibility of dealing with a non-standard compositional behavior ii) it is able to deal with the phenomenon of attribute emergence (and loss) for the combined concept iii) it preservers the notions of necessity and impossibility of property attribution for the combined concept iv) it explicitly assumes that the combination is not commutative (i.e. the different attribution of the HEAD-MODIFER roles does non provide the same combined concept) and that v) there are dominance effects in the concepts to be combined (both these effects are obtained via the HEAD-MODIFIER heuristics).

With respect to other formal approaches developed to model the same phenomenon [\cite{smith1988combining,kamp1995prototype}], we have also shown that our formalism is able to account for forms of NOUN-NOUN concept combination (e.g. like stone lion or, similarly, like porcelain cat) that such frameworks are not able to model (see [\cite{gardenfors1998concept}]). Also: our framework does not provide any increase in the reasoning complexity with respect to the standard monotonic DL $\alc$.

It is also worth-noticing that the overall reasoning procedure of  $\pfl$ is directly inspired to the Composite Prototype Model (CPM) proposed by Hampton in order to account for the phenomenon of typicality-based composition [\cite{hampton1987inheritance,hampton1988overextension,hampton2017compositionality}]. 
The similarity to CPM concerns both the identification of dominant conceptual features to consider for the composition (and this is reflected, in  $\pfl$ , by considering: i) the distinction between typical and rigid properties ii) the probability values associated to every typical inclusions iii) the HEAD-MODIFIER difference) and the so called phenomenon of \emph{attribute emergence} (shown in the \emph{Pet Fish} case).
As a difference with respect to the Hampton's model, the reasoning procedure of  $\pfl$  does not assume a process proceeding, first, on the attempt of obtaining the compound by composing the features and then via a process of successive identification and amalgamation of the incompatibility eventually occurring. In our case, in fact, the logic $\pfl$ obtains the combination by directly excluding the inconsistent scenarios and resorts to all its constituent ingredients (i.e. the non monotonic procedure of $\alct$,  the probabilistic ranking provided by the DISPONTE semantics and the HEAD-MODIFIER heuristics) to determine which features to inherit.

Other attempts similar to the one proposed here concerns the modelling of the conceptual blending phenomenon: a task where the obtained concept is \emph{entirely novel} and has no strong association with the two base concepts (for details about the differences between conceptual combination and conceptual blending see [\cite{nagai2006formaldescription}]). 
In this setting, [\cite{confalonieri2016}] proposed a mechanism for conceptual blending based on the DL $\mathcal{EL}^{++}$. They construct the generic space of two  concepts by introducing an upward refinement operator that is used for finding common generalizations of $\mathcal{EL}^{++}$ concepts. However, differently from us, what they call prototypes are expressed in the standard monotonic DL, which does not allow to reason about typicality and defeasible inheritance. More recently, a different approach is proposed in [\cite{eppe2018computational}], where the authors see the problem of concept blending as a nonmonotonic search problem and proposed to use Answer Set Programming (ASP) to deal with this search problem. As we have shown in the Chimera example, $\pfl$ is flexible enough to be applied also to the case of conceptual blending. There is no evidence, however, that both the frameworks of [\cite{confalonieri2016}] and [\cite{eppe2018computational}] would be able to model (in toto or in part) conceptual combination problems like the PET FISH. As such, $\pfl$ seems to provide a more general mechanism for modelling the combinatorial phenomenon of concept invention (that can be obtained both with combination and blending). 

We are currently developing an efficient reasoner for the logic $\pfl$, relying on the prover RAT-OWL [\cite{ratowl}] for reasoning in the nonmonotonic logic $\alct$ underlying our approach and on the well established HermiT reasoner. The first version of the system
is implemented in Pyhton and exploits a translation of an $\alct$ knowledge base into standard $\alc$.

In future research we aim at extending our approach to more expressive DLs, such as those underlying the standard OWL language. Starting from the work of [\cite{DL2014}], applying the logic with the typicality operator and the rational closure to $\mathcal{SHIQ}$, we intend to study whether and how $\pfl$ could provide an alternative solution to the problem of the ``all or nothing'' behavior of rational closure with respect to property inheritance. 

We envision different areas of application for our framework. We have already mentioned the field of computational creativity, but other employments can be considered. For example, we plan to use the logic $\pfl$ as the basis for the exploitation of a new area of autonomic computing [\cite{auton}] concerning the problem of the automatic generation of novel knowledge in a cognitive artificial agent, starting from an initial commonsense knowledge base. This approach will require to enrich the knowledge processing mechanisms of current cognitive agents [\cite{lieto2017knowledge}] and presents an element of innovation in that it does not assume that the only way to process and reason on new knowledge is via an external injection of new information but via a process of automatic knowledge generation. We believe that such an approach can find practical applications in the areas of cognitive agent architectures and robotics. Finally, as a short-term future work, we aim at investigating the use of $\pfl$ for generating metaphors based on frame-semantics (and therefore starting from event-centric representations instead of subject-centric ones) by using the recently developed repository Amnestic Forgery [\cite{DBLP:conf-fois-GangemiAP18}]. Should $\pfl$ prove to be able to model also metaphorical phenomena starting from a different semantic approach, this would represent an additional symptom that its underlying  procedure captures some foundational elements of commonsense concept combination and invention.

%
%

\bibliographystyle{apalike}
\bibliography{ijcai18.bib}

\end{document}